%% file: main.tex
\documentclass[10pt]{article} 

\usepackage[preprint]{rlj}           

\usepackage{amssymb}
\usepackage{amsthm}
\usepackage{mathtools}
\usepackage{booktabs}
\usepackage{nicefrac}
\usepackage{microtype}
\usepackage{graphicx}
\usepackage{subcaption}
\usepackage{amsfonts}
\usepackage{listings}
\usepackage{xcolor}
\usepackage{enumitem}

\lstdefinestyle{python}{
  language=Python,
  basicstyle=\ttfamily\tiny,
  keywordstyle=\color{blue}\bfseries,
  stringstyle=\color{red!70!black},
  commentstyle=\color{green!50!black}\itshape,
  numberstyle=\tiny\color{gray},
  numbers=left,
  breaklines=true,
  showstringspaces=false,
  morekeywords={jnp, True, False, None, self},
}

\tcbuselibrary{skins,breakable}

\definecolor{exampleBack}{RGB}{231,246,250}
\definecolor{exampleLine}{RGB}{118,209,225}

\newtcolorbox{examplebox}[1][]{%
  enhanced, breakable,
  colback=exampleBack, colbacktitle=exampleBack,
  coltitle=black,
  boxrule=0pt, frame hidden,
  arc=1.6mm,
  left=5mm, right=5mm, top=4mm, bottom=4mm,
  toptitle=3mm, bottomtitle=2mm,
  fontupper=\small,
  fonttitle=\small\bfseries,
  before upper={\setlength{\parskip}{4pt}\setlength{\parindent}{0pt}},
  overlay unbroken={\fill[exampleLine]
    ([xshift=1.6mm]frame.north west) rectangle ([xshift=1.6mm+3pt]frame.south west);},
  overlay first={\fill[exampleLine]
    ([xshift=1.6mm]frame.north west) rectangle ([xshift=1.6mm+3pt]frame.south west);},
  overlay middle={\fill[exampleLine]
    (frame.north west) rectangle ([xshift=3pt]frame.south west);},
  overlay last={\fill[exampleLine]
    (frame.north west) rectangle ([xshift=1.6mm+3pt]frame.south west);},
  #1
}

\newtheorem{theorem}{Theorem}
\newtheorem{lemma}[theorem]{Lemma}
\newtheorem{proposition}[theorem]{Proposition}

\title{Overcoming Valid Action Suppression in Unmasked Policy Gradient Algorithms}

\setrunningtitle{Overcoming Valid Action Suppression in Unmasked Policy Gradients}

\author{Renos Zabounidis$^{1}$, Roy Siegelmann$^{2}$, Mohamad Qadri$^{1}$, Woojun Kim$^{1}$, Simon Stepputtis$^{3}$, Katia P.\ Sycara$^{1}$}

\emails{renosz@andrew.cmu.edu}

\affiliations{
$^{1}$\textbf{Carnegie Mellon University} \quad
$^{2}$\textbf{Massachusetts Institute of Technology} \quad
$^{3}$\textbf{Virginia Tech, Department of Mechanical Engineering}
}

\contribution{We identify valid action suppression through shared parameters as a mechanism causing unmasked training to fail.}{When actions are invalid at visited states, policy gradients decrease their probabilities; shared parameters propagate this decrease to unvisited states where those actions are valid, causing exponential suppression before the agent reaches them. Rarely-valid actions such as descending staircases or opening doors suffer most severely. Prior theory only showed that masking preserves gradient correctness; we provide the first analysis of why unmasked training degrades performance through this mechanism.}

\contribution{We propose feasibility classification to learn validity-discriminating representations, enabling deployment without oracle masks.}{While masking prevents policy-level suppression, it maintains correlated representations because the encoder receives no gradient signal to distinguish valid from invalid states. Feasibility classification induces separate representations for valid and invalid states by training the encoder to predict action validity from observations. This enables a practical deployment strategy: train with masking for stability, then substitute the learned predictor when oracle masks are unavailable at test time.}

\contribution{We introduce KL-balanced classification that weights examples by their impact on policy behavior through KL divergence between oracle-masked and predicted-masked policies.}{Unlike uniform focal loss, KL-balanced prioritizes actions where misclassification would most change the policy. The weight for each action is the KL divergence between the policy using ground-truth masks and the policy using predicted masks. This improves deployment performance without oracle masks.}

\contribution{We demonstrate empirically that feasibility classification enables reliable deployment without oracle masks across multiple architectures and environments.}{On challenging domains with large action spaces, our approach maintains strong performance when deploying with learned predictors instead of ground-truth masks, while baseline masking collapses without oracle access. Feasibility classification reduces feature correlation between valid and invalid states, validating that it induces validity-discriminating representations.}

\keywords{Action masking, policy gradient, valid action suppression, feasibility classification}

\summary{Action masking constrains policies to valid actions in discrete-action reinforcement learning. Existing theory proves that masking preserves gradient correctness, but does not explain why unmasked training fails. We identify valid action suppression as the mechanism: gradients at visited states propagate through shared parameters to suppress valid actions at unvisited states. We analyze this under linear parameterization, proving exponential suppression bounds when features remain correlated between visited and unvisited states. We validate empirically that deep networks exhibit this correlation through shared prefinal layer representations. We propose feasibility classification to resolve the deployment dilemma: training the encoder to predict validity enables deployment without oracle masks. We validate on Craftax and MiniHack with multiple architectures.}

\begin{document}

\makeCover
\maketitle

\begin{abstract}
In reinforcement learning environments with state-dependent action validity, action masking consistently outperforms penalty-based handling of invalid actions, yet existing theory only shows that masking preserves the policy gradient theorem. We identify a distinct failure mode of unmasked training: it systematically suppresses valid actions at states the agent has not yet visited. This occurs because gradients pushing down invalid actions at visited states propagate through shared network parameters to unvisited states where those actions are valid. We prove that for softmax policies with shared features, when an action is invalid at visited states but valid at an unvisited state $s^*$, the probability $\pi(a \mid s^*)$ is bounded by exponential decay due to parameter sharing and the zero-sum identity of softmax logits. This bound reveals that entropy regularization trades off between protecting valid actions and sample efficiency, a tradeoff that masking eliminates. We validate empirically that deep networks exhibit the feature alignment condition required for suppression, and experiments on Craftax, Craftax-Classic, and MiniHack confirm the predicted exponential suppression and demonstrate that feasibility classification enables deployment without oracle masks.
\end{abstract}

\input{sections/01_introduction}
\input{sections/02_related_work}
\input{sections/03_preliminaries}
\input{sections/04_theory}
\input{sections/05_method}
\input{sections/06_experiments}
\input{sections/07_conclusion}

\subsubsection*{Acknowledgments}
\label{sec:ack}
This material is based upon work supported by the National Science Foundation Graduate Research Fellowship Program under Grant No.\ DGE-2140739, the Defense Advanced Research Projects Agency (DARPA) under Contract No.\ FA8750-23-2-1015 (ANSR), and the Office of Naval Research under Grant No.\ N00014-23-1-2840. Any opinions, findings, and conclusions or recommendations expressed in this material are those of the author(s) and do not necessarily reflect the views of the National Science Foundation, DARPA, the Office of Naval Research, or the U.S.\ Government.

\bibliography{references}
\bibliographystyle{rlj}

\beginSupplementaryMaterials

\appendix

\input{appendix_sections/appendix_proofs}
\input{appendix_sections/appendix_environments}
\input{appendix_sections/appendix_validity}
\input{appendix_sections/appendix_training}
\input{appendix_sections/appendix_ablations}
\input{appendix_sections/appendix_additional_experiments}

\end{document}

%% file: sections/01_introduction.tex
\section{Introduction}
\label{sec:introduction}

In discrete-action reinforcement learning, agents often face state-dependent constraints where the set of valid actions varies by state. Action masking, which constrains the policy to valid actions by zeroing invalid action probabilities, is standard in robotic assembly, combinatorial optimization, and strategy games \citep{huang2020closer, Tang2020ImplementingAM, Stappert2025IntegratingHK, liu2025physicsawarecombinatorialassemblysequence}. Masking outperforms penalty-based approaches by margins exceeding $50\%$ \citep{hou2023exploring}.

Existing theory only proves that masking preserves gradient correctness \citep{huang2020closer}. Why unmasked training fails remains unexplained. This gap leaves practitioners without guidance on which actions matter most for masking. Furthermore, masking requires a validity oracle at every timestep, yet many deployment settings lack ground-truth validity functions. Training with masks and deploying without them fails because the policy learns no mechanism to infer validity.

Unmasked training fails through valid action suppression. When an action is invalid at visited states, gradients decrease its probability, and shared parameters propagate this decrease to unvisited states where the action is valid. Suppression is exponential and most significant for rarely-valid actions. These are applicable at few states yet critical for task completion, such as descending staircases or opening doors. These actions become exponentially suppressed before the agent reaches the states where they are needed (Figure~\ref{fig:suppression_mechanism}).

Masking prevents suppression by excluding invalid actions from the policy entirely. However, when invalid actions are never considered, the encoder learns no features for distinguishing valid from invalid states. We propose feasibility classification to restore this learning signal. Training the encoder to predict validity from observations induces validity-discriminating features. Classification enables deployment without oracle masks: train with masking for stability, then substitute the learned predictor at test time. Practitioners should prioritize classification accuracy on rarely-valid actions. Because suppression grows exponentially with training steps (Theorem~\ref{thm:prob_suppression}), actions that remain invalid throughout most of training accumulate the strongest suppression. Yet these same actions---descending staircases, opening doors, using special abilities---are often the only path to task completion. A classifier that errs on common actions such as movement directions has minimal impact; one that errs on rare critical actions reintroduces the suppression problem at deployment.

Our contributions are: (i) a theoretical characterization of valid action suppression, proving exponential decay under invalid-action dominance and feature alignment (Theorem~\ref{thm:prob_suppression}); (ii) empirical evidence that oracle masking maintains high feature correlation near 0.8 while classification reduces it to approximately 0.4, restoring the gradient signal for learning validity-discriminating representations (Section~\ref{sec:rq2}); (iii) a KL-balanced classification loss that outperforms focal loss by $2\times$ when deploying without oracle masks (Section~\ref{sec:rq4}); and (iv) feasibility classification enabling deployment without oracle masks at 2\% performance cost (43.2 vs.\ 43.9) by substituting the learned predictor for ground-truth masks (Section~\ref{sec:rq4}).

%% file: sections/02_related_work.tex
\section{Related Work}
\label{sec:related_work}

\textbf{Action Masking.}
\citet{huang2020closer} proved that masking invalid actions before the softmax preserves unbiased policy gradients. They demonstrated gains over penalty-based handling. \citet{Tang2020ImplementingAM} implemented masking into PPO. \citet{hou2023exploring} systematically compared masked on-policy and off-policy algorithms in large discrete action spaces. Large action spaces themselves pose significant optimization and exploration challenges \citep{dulac2015deep}, motivating action-space pruning strategies such as masking. \citet{Stappert2025IntegratingHK} integrated domain knowledge via masking in operations research. \citet{liu2025physicsawarecombinatorialassemblysequence} derived physics-based masks for robotic assembly without additional environment interaction. Across domains including RTS games, combinatorial optimization, and robotics, masking is now standard practice. However, existing theory focuses on gradient correctness or empirical performance. None explains why \emph{unmasked} training degrades the policy at unvisited states. We provide the first analysis of the dynamics underlying this degradation.

\textbf{Action Elimination and Learned Masks.}
\citet{zahavy2019learnlearnactionelimination} introduced action elimination networks that learn to suppress suboptimal actions in DQN. \citet{Zhong2023NoPM} remove redundant actions via KL-based similarity. \citet{Wang2024LearningSA} learn state-specific masks using bisimulation metrics \citep{ferns2011bisimulation}. Similarly, \citet{zabounidis2025scalar} use LLM planning to compose and select skills, effectively pruning the action space for RL. These approaches treat masking as a learned compression of the action space. We analyze the dynamics when masking is absent or imperfect. Existing elimination methods do not quantify the cost of mask errors or characterize how invalid-action gradients propagate through shared representations. Our suppression theorem formalizes this cost and provides a framework for evaluating approximate or learned masks.

\textbf{Softmax Policy Gradient Theory.}
Prior work analyzes softmax convergence assuming all actions are admissible \citep{Mei2020OnTG, li2022softmaxpolicygradientmethods, cen2021fastglobalconvergencenatural}. We show that when some actions are invalid at visited states, their gradients propagate through shared features to suppress valid actions elsewhere.

\textbf{Representation Interference in RL.}
\citet{Liu2023MeasuringAM} measured gradient interference across state-action pairs and proposed mitigation strategies. \citet{Lan2022OnTG} analyzed representation generalization in RL. \citet{zabounidis2026disentangled} study disentangled representations for improving generalization. \citet{nikishin2022primacybiasdeepreinforcement} documented a primacy bias in which early experience dominates later learning. Related work on loss of plasticity \citep{Abbas2023LossOP} and dormant neurons \citep{Sokar2023TheDN} highlight gradient flow and cross-task interference. Research on temporal credit assignment, such as RUDDER \citep{arjona2019rudder}, shows how gradients propagate backward across states and time. These works document interference empirically. We identify a structurally induced and analytically tractable form of interference. Invalid-action gradients at visited states propagate through shared features and exponentially suppress valid actions at unvisited states. Our theorem makes this cross-state coupling explicit.

\textbf{Constrained and Structured Action Spaces.}
Constrained policy optimization \citep{achiam2017constrained} formulates safety and feasibility as cMDPs solved via trust-region methods. \citet{stolz2024excluding} extend masking ideas to continuous actions, and \citet{dai2026languagemovementprimitivesgrounding} ground language models in structured robot motion primitives. \citet{theile2024action} show that with a perfect feasibility model, a CMDP reduces to an unconstrained MDP. \citet{Keane2025StrategyMA} analyze masking in value-based RL and prove convergence under contraction assumptions. Structured action-value architectures such as TreeQN and ATreeC \citep{farquhar2018treeqn} incorporate inductive bias over action structure to improve generalization. These approaches assume accurate feasibility information or constraint models. We complement this literature by analyzing the opposite regime: where invalid actions are not excluded. We show that in this case, shared function approximation can induce systematic policy degradation even before constraint violations occur.

%% file: sections/03_preliminaries.tex
\section{Preliminaries}
\label{sec:preliminaries}

Consider a Markov Decision Process (MDP) $(\mathcal{S}, \mathcal{A}, P, R, \gamma)$ with state space $\mathcal{S}$, action space $\mathcal{A}$, transition kernel $P(s' \mid s, a)$, reward function $R: \mathcal{S} \times \mathcal{A} \to \mathbb{R}$, and discount $\gamma \in (0,1)$.

\paragraph{Abstraction for reward and validity.}
For many complex tasks, reward functions are defined over high-level symbolic properties rather than raw observations. Achievement criteria such as ``craft an iron pickaxe'' are naturally expressed over symbolic properties like inventory contents rather than pixels. This implicitly assumes an \emph{abstraction function} $\chi: \mathcal{S} \to \mathcal{Z}$ mapping environment states to \emph{abstract states} $z \in \mathcal{Z}$, e.g.,
\[
\chi(s) = \{\texttt{has}(\texttt{wood}, 5),\; \texttt{placed}(\texttt{table}),\; \texttt{on}(\texttt{staircase}),\; \ldots\}
\]
where predicates like \texttt{has}, \texttt{placed}, and \texttt{on} capture task-relevant properties.

The same abstraction enables symbolic specification of action validity. Consider what makes an action valid: to \textsf{descend} stairs, the agent must be \texttt{on(staircase)}; to \textsf{open\_door}, it must be \texttt{adjacent(closed\_door)}. An action is \textit{valid} at state $s$ when executing it is both \textit{applicable} (preconditions satisfied in $\chi(s)$) and \textit{safe} (no catastrophic outcomes such as entering lava or violating hard constraints). Otherwise it is \textit{invalid}. We define the \emph{validity indicator} $\nu: \mathcal{Z} \times \mathcal{A} \to \{0, 1\}$ where $\nu(z, a) = 1$ iff action $a$ is valid at abstract state $z$. For state $s$, we write $\nu(s, a) \equiv \nu(\chi(s), a)$.

This abstraction is crucial because it means validity---like reward---can be computed exactly from the symbolic state without learning from pixels. The agent need not discover through trial and error that \textsf{descend} requires stairs.

Validity is distinct from optimality. Many valid actions are suboptimal---the agent must discover which through experience. Invalid actions are different: from the symbolic structure of the environment, we can determine they cannot accomplish anything useful. Crafting without materials, opening a door when none is adjacent, descending stairs that do not exist---these are dominated by at least one valid alternative. The environment may handle them as no-ops, penalties, or blocked transitions \citep{Khetarpal2020WhatCI}, but in every case the action is strictly worse than acting on a satisfied precondition. This \emph{dominance gap}, the margin by which invalid actions are worse than valid alternatives, is precisely Condition~(i) of Theorem~\ref{thm:prob_suppression}.

\paragraph{Visited and unvisited states.}
Under policy $\pi$ with visitation distribution $d_\pi(s) = (1 - \gamma) \sum_{t=0}^\infty \gamma^t \Pr(s_t = s \mid \pi)$, we partition the state space:
\begin{equation}
    \mathcal{S}_{\mathrm{vis}} = \{s : d_\pi(s) > 0\}, \qquad \mathcal{S}_{\mathrm{unvis}} = \mathcal{S} \setminus \mathcal{S}_{\mathrm{vis}}.
\end{equation}
Policy gradient updates carry no direct signal for $s' \in \mathcal{S}_{\mathrm{unvis}}$; all changes to $\pi(\cdot \mid s')$ arise indirectly through parameter coupling with $\mathcal{S}_{\mathrm{vis}}$. This coupling creates the suppression mechanism we analyze in Section~\ref{sec:theory}.

\paragraph{Softmax policies with shared parameters.}
Policies are parameterized by logits $z_a(s; \theta)$ with $\pi_\theta(a \mid s) \propto \exp(z_a(s; \theta))$. Our analysis applies to policies where logits are linear in a state feature representation: $z_a(s) = \phi(s)^\top w_a$ where $\phi: \mathcal{S} \to \mathbb{R}^d$ is the feature map and $w_a \in \mathbb{R}^d$ are per-action parameters. For neural network policies, $\phi(s)$ corresponds to prefinal layer activations---the representation shared across action heads. Section~\ref{sec:rq2} validates this empirically by measuring feature correlation in this layer. The action space size is $n = |\mathcal{A}|$. At initialization, $z_a(s; \theta_0) \approx c$ for all $a, s$, giving $\pi_{\theta_0}(a \mid s) \approx 1/n$.

\paragraph{Action masking.}
Given the validity function $\nu(s, a)$, \emph{action masking} restricts the policy to valid actions by excluding invalid actions from the softmax:
\begin{equation}
    \pi_\theta^{\mathrm{oracle}}(a \mid s) = \frac{\exp(z_a(s; \theta)) \cdot \nu(s, a)}{\sum_{j \in \mathcal{A}} \exp(z_j(s; \theta)) \cdot \nu(s, j)}.
\end{equation}
Invalid actions receive zero probability and contribute zero gradient, decoupling $\mathcal{S}_{\mathrm{vis}}$ from $\mathcal{S}_{\mathrm{unvis}}$ for actions that are invalid in the former but valid in the latter \citep{huang2020closer, Chandak2019ReinforcementLW}.

%% file: sections/04_theory.tex
\section{Gradient Dynamics of the Unmasked Policy}
\label{sec:theory}

Without masking, parameter coupling between $\mathcal{S}_{\mathrm{vis}}$ and $\mathcal{S}_{\mathrm{unvis}}$ causes gradient propagation that suppresses valid actions. When action $a$ is invalid at visited states but valid at unvisited $s^*$, repeated negative updates at visited states propagate through shared parameters. This suppresses $\pi(a \mid s^*)$ before the agent reaches $s^*$ (Figure~\ref{fig:suppression_mechanism}). We quantify this suppression and show it occurs exponentially under two conditions.

\begin{figure}[t!]
    \centering
    \includegraphics[width=\textwidth]{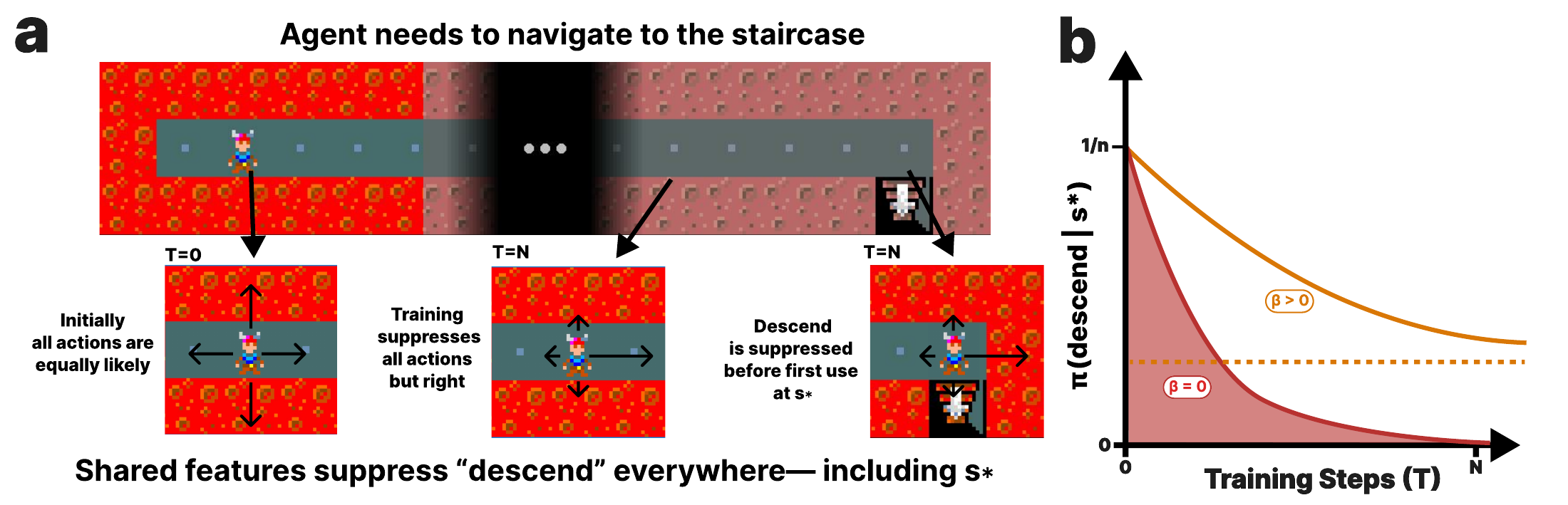}
    \caption{Suppression mechanism illustrated on a staircase corridor. \textbf{(a)} The agent must traverse a corridor to reach a staircase at $s^*$, where \textsf{descend} is the only valid goal-reaching action. At $T\!=\!0$, all actions are equally likely ($1/n$). After $N$ training steps, gradient updates at visited states reinforce \textsf{right} and suppress all other actions, including \textsf{descend}. Because parameters are shared, this suppression propagates to $s^*$ before the agent arrives. The \textsf{descend} action is suppressed at the one state where it is needed. \textbf{(b)} Upper bound on $\pi(\textsf{descend} \mid s^*)$ from Theorem~\ref{thm:prob_suppression}. Without entropy regularization ($\beta\!=\!0$), the probability decays exponentially. With entropy regularization ($\beta\!>\!0$), a floor emerges but cannot eliminate suppression (Eq.~\ref{eq:entropy_sandwich}).}
    \label{fig:suppression_mechanism}
\end{figure}

\paragraph{Conditions for suppression.}
Appendix~\ref{app:proofs} contains supporting lemmas.

\emph{(i) Invalid-action dominance gap.} At every visited state, invalid actions are strictly suboptimal whenever they appear in the policy support. At each step $\tau$ and state $s \in \mathcal{S}_{\mathrm{vis}}$:
\begin{equation}\label{eq:dominance}
    Q^{\pi_\tau}(s, a) \leq \max_{b : \nu(s, b) = 1} Q^{\pi_\tau}(s, b) - \delta_\tau(s)
\end{equation}
where $a$ is an invalid action and $\delta_\tau(s) > 0$ is some margin. Invalid actions cannot advance toward the goal and are dominated by at least one valid alternative. Under this condition, the policy gradient applies a strictly negative update to the logit of $a$ at every visited state by Lemma~\ref{lem:dominance_gap}.

\emph{(ii) Feature alignment.} The logit decrease at visited states propagates to unvisited state $s^*$ through the shared features $\phi(\cdot)$ whenever:
\begin{equation}\label{eq:non_orthogonal}
    \phi(s^*)^\top \mathbb{E}_{s \sim d_{\pi_\tau}}\!\left[\, c_\tau(s)\, \phi(s)\,\right] > 0,
\end{equation}
where $c_\tau(s) = \pi_\tau(a \mid s)\, |A^{\pi_\tau}(s, a)| > 0$ as shown in Proposition~\ref{prop:suppression}. Condition~(ii) holds when $\phi(s^*)$ is not orthogonal to the advantage-weighted mean of visited features. For neural networks, $\phi(s)$ is the prefinal layer activation. Condition~(ii) is satisfied when this representation does not perfectly isolate $s^*$ from visited states. Section~\ref{sec:rq2} measures this correlation directly.

\begin{theorem}[Probability suppression at first valid occurrence]\label{thm:prob_suppression}
Fix action $a$ and let $s^*$ be a state where $a$ is valid but $s^* \notin \mathcal{S}_{\mathrm{vis}}$. This is a \emph{first valid occurrence}. Assume uniform initialization $\pi_0(j \mid s^*) = 1/n$. Suppose Conditions~(i) and (ii) hold throughout training prior to visiting $s^*$. Let $\beta \geq 0$ be the entropy regularization coefficient. Define the per-step suppression rate:
\begin{equation}\label{eq:kappa_def}
    \kappa_\tau \;=\; \eta\, \phi(s^*)^\top \mathbb{E}_{s \sim d_{\pi_\tau}}\!\left[\, c_\tau(s)\, \phi(s)\,\right] \;>\; 0
\end{equation}
and the cumulative suppression $K_T = \sum_{\tau=0}^{T-1} \kappa_\tau$. Then after $T$ gradient steps before $s^*$ is first visited:
\begin{equation}\label{eq:T_step_decay}
    \pi_T(a \mid s^*) \leq \frac{e^{-K_T}}{n}.
\end{equation}
If $\beta > 0$, the cumulative suppression converges with $K_\infty < \infty$ and:
\begin{equation}\label{eq:entropy_sandwich}
    \exp\!\left(-\frac{r_{\max}}{\beta(1-\gamma)}\right) \;\leq\; \pi_\infty(a \mid s^*) \;\leq\; \frac{e^{-K_\infty}}{n}.
\end{equation}
\end{theorem}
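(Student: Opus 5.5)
The plan is to track the logit $z_a(s^*) = \phi(s^*)^\top w_a$ together with the log-partition at $s^*$ across the $T$ updates made before $s^*$ is first visited, and convert the logit decrease into a probability bound. Write the policy-gradient update of the head $w_a$ as
\[
\Delta w_a = \eta\,\mathbb{E}_{s\sim d_{\pi_\tau}}\!\left[\pi_\tau(a\mid s)\,A^{\pi_\tau}(s,a)\,\phi(s)\right],
\]
which is the standard softmax-linear gradient. Condition~(i), via Lemma~\ref{lem:dominance_gap}, gives $A^{\pi_\tau}(s,a) < 0$ at every visited $s$ where $a$ is invalid, so $\pi_\tau(a\mid s)A^{\pi_\tau}(s,a) = -c_\tau(s)$. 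Taking the inner product with $\phi(s^*)$ yields $\Delta z_a(s^*) = -\kappa_\tau$, with $\kappa_\tau>0$ by Condition~(ii); this is presumably Proposition~\ref{prop:suppression}. Summing gives $z_a^{(T)}(s^*) = z_a^{(0)}(s^*) - K_T$.

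To pass from logits to probabilities I will use the zero-sum identity of the softmax gradient. Since $\sum_j \nabla_{z_j}\log\pi(b\mid s) = 0$, the per-state logit updates sum to zero across actions, so $\sum_j z_j(s^*)$ is conserved (this is the identity the abstract alludes to). I then need $\log\sum_j e^{z_j^{(T)}(s^*)} \ge \log n + \bar z^{(0)}$. By Jensen, $\log\sum_j e^{z_j} \ge \log n + \frac1n\sum_j z_j$, and the mean is preserved, so the log-partition never drops below its initial value $\log n + c$. Hence
\[
\pi_T(a\mid s^*) = \exp\!\big(z_a^{(T)} - \log Z_T\big) \le \exp\!\big(c - K_T - \log n - c\big) = \frac{e^{-K_T}}{n},
\]
which is \eqref{eq:T_step_decay}. \textbf{The main obstacle} is the zero-sum property under the linear parameterization: $\sum_j w_j$ is invariant only if the gradient's sum over heads vanishes. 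This holds for the score-function gradient, because $\sum_j(\mathbb{1}[j=b]-\pi(j\mid s)) = 0$, and also for the entropy gradient. I will verify this explicitly, rather than relying on the per-state statement, and also check that no bias or shared-$\phi$ update breaks it. If $\phi$ itself moves, I would restrict to the fixed-feature linear regime stated in the Preliminaries.

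For $\beta>0$, the upper bound follows by the same argument, provided Conditions~(i)--(ii) persist so that each $\kappa_\tau>0$. Convergence $K_\infty<\infty$ comes from the lower bound: if $K_T\to\infty$, then $\pi_T(a\mid s^*)\to0$, contradicting the floor. So it suffices to prove the floor. For that I will use the entropy-regularized fixed point, $\pi_\infty(\cdot\mid s)\propto \exp(Q_{\mathrm{soft}}(s,\cdot)/\beta)$, where soft values lie in $[0, r_{\max}/(1-\gamma)]$ (assuming rewards in $[0,r_{\max}]$, with the entropy bonus absorbed). The ratio between any two actions is then at least $\exp(-r_{\max}/(\beta(1-\gamma)))$. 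The stated bound, lacking a $1/n$ factor, presumably comes from a cruder normalization, and I would match it by bounding the exponent gap directly. I expect the justification that the linear policy actually reaches this soft-optimal form at the unvisited $s^*$ to be delicate, and would state it as an assumption inherited from the appendix lemma.
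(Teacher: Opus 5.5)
For $\beta=0$ your argument matches the paper's proof. Proposition~\ref{prop:suppression} gives $\Delta z_a^{(\tau)}(s^*)=-\kappa_\tau$. Lemma~\ref{lem:zero_sum} proves the zero-sum property at the parameter level, $\sum_j\Delta w_j=0$ because $\sum_j\pi(j\mid s)A^\pi(s,j)=0$ pointwise, which is exactly the check you propose. Jensen then yields \eqref{eq:T_step_decay}. The gaps are all in the $\beta>0$ part.

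\textbf{The upper bound for $\beta>0$.} This is not ``the same argument'' with the theorem's $\kappa_\tau$. Under entropy regularization, $\Delta w_a$ picks up the extra term $-\eta\beta\,\mathbb{E}_{s\sim d_{\pi_\tau}}[\pi_\tau(a\mid s)(\log\pi_\tau(a\mid s)+H(\pi_\tau(\cdot\mid s)))\phi(s)]$. This term opposes suppression once $\pi_\tau(a\mid s)<e^{-H(\pi_\tau(\cdot\mid s))}$, so the per-step decrement at $s^*$ is an entropy-corrected $\kappa^{\mathrm{ent}}_\tau$, not $\kappa_\tau$. The paper's proof switches to $\kappa^{\mathrm{ent}}_\tau$ and defines $K_\infty$ as its sum. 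You must do the same and take $\kappa^{\mathrm{ent}}_\tau>0$ as the hypothesis.

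With the unregularized $\kappa_\tau$, both claims are false. Take a single visited state $s_1$ with a self-loop, $s^*$ unreachable, $d=1$, $\phi(s_1)=\phi(s^*)=1$, reward $0$ for $a$ and $r_{\max}$ for every other action. The regularized dynamics converge to a soft optimum with $\pi_\infty(a\mid s_1)>0$ and $|A(s_1,a)|\ge r_{\max}(1-\pi_\infty(a\mid s_1))>0$. Hence $c_\tau$ has a positive limit and $K_T\to\infty$, while $\pi_T(a\mid s^*)=\pi_T(a\mid s_1)$ stays bounded away from zero.

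\textbf{The floor (the more serious gap).} The form $\pi_\infty(\cdot\mid s)\propto\exp(Q_{\mathrm{soft}}(s,\cdot)/\beta)$ is a stationarity condition at states with $d_\pi(s)>0$. Since $d_\pi(s^*)=0$, neither the advantage nor the entropy gradient is ever evaluated at $s^*$. The limit $\pi_\infty(\cdot\mid s^*)$ is just the softmax of $\phi(s^*)^\top w_j$ at wherever the visited-state dynamics leave the $w_j$.

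In the example above, set $\phi(s^*)=M$ instead. Conditions~(i)--(ii) and $\kappa^{\mathrm{ent}}_\tau>0$ still hold: by symmetry, for small $\eta$ the gap $w_b-w_a$ grows monotonically to some $g>0$. But the logit gap at $s^*$ is $Mg$, so $\pi_\infty(a\mid s^*)\le e^{-Mg}\to0$ as $M\to\infty$, whereas the stated floor does not depend on $\phi(s^*)$.

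The missing $1/n$ is not a normalization artifact either. For $\beta>r_{\max}/((1-\gamma)\log n)$ the stated floor exceeds $1/n>e^{-K_\infty}/n$. The sandwich is then self-contradictory, so no exponent-gap argument can produce it.

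The appendix argument you intended to inherit relies on the bonus $-\beta\log\pi(a\mid s^*)$ acting at $s^*$ itself, which has exactly this defect. Adopting it as an assumption therefore assumes the conclusion. Because you deduce $K_\infty<\infty$ from the floor, that step fails as well.

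A workable route is to prove that the regularized iterates $w_j^{(\tau)}$ stay bounded. Their increments lie in the span of visited features, where entropy regularization should keep logits bounded. Then the monotone sequence $K^{\mathrm{ent}}_T$ is bounded, hence convergent, and you get a positive floor. That floor, however, depends on $\|\phi(s^*)\|$ and on the bound on the iterates, not the $\phi$-free expression in \eqref{eq:entropy_sandwich}.
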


The bound connects to neural networks through Condition~(ii): when the prefinal layer representation $\phi(s)$ remains correlated between visited and unvisited states, gradient updates propagate through shared weights. Section~\ref{sec:rq2} validates empirically that deep networks exhibit correlation of 0.4--0.8 throughout training, confirming the bound is tight in practice. Proof appears in Appendix~\ref{app:proofs}, with the sandwich bound \eqref{eq:entropy_sandwich} proven in Appendix~\ref{app:entropy_proof}.

Action masking sidesteps suppression by zeroing out invalid actions at the policy level. This decouples updates at visited states from probability mass at unvisited states. However, masking leaves the encoder's internal representation unchanged. The features $\phi(s)$ remain correlated between valid and invalid states because the encoder receives no gradient signal to distinguish them. This creates a deployment limitation: without oracle masks at test time, the policy has no mechanism to determine which actions are valid. To enable deployment without oracle masks, we must break Condition~(ii) at its source: train the encoder to predict validity from observations, reducing feature correlation between states where an action is valid versus invalid. This feasibility classification approach provides validity-aware representations while maintaining the training stability of oracle masking.

%% file: sections/05_method.tex
\section{Mitigating Suppression via Representation Learning}
\label{sec:method}

\begin{figure}[t!]
    \centering
    \includegraphics[width=\textwidth, height=6.75cm]{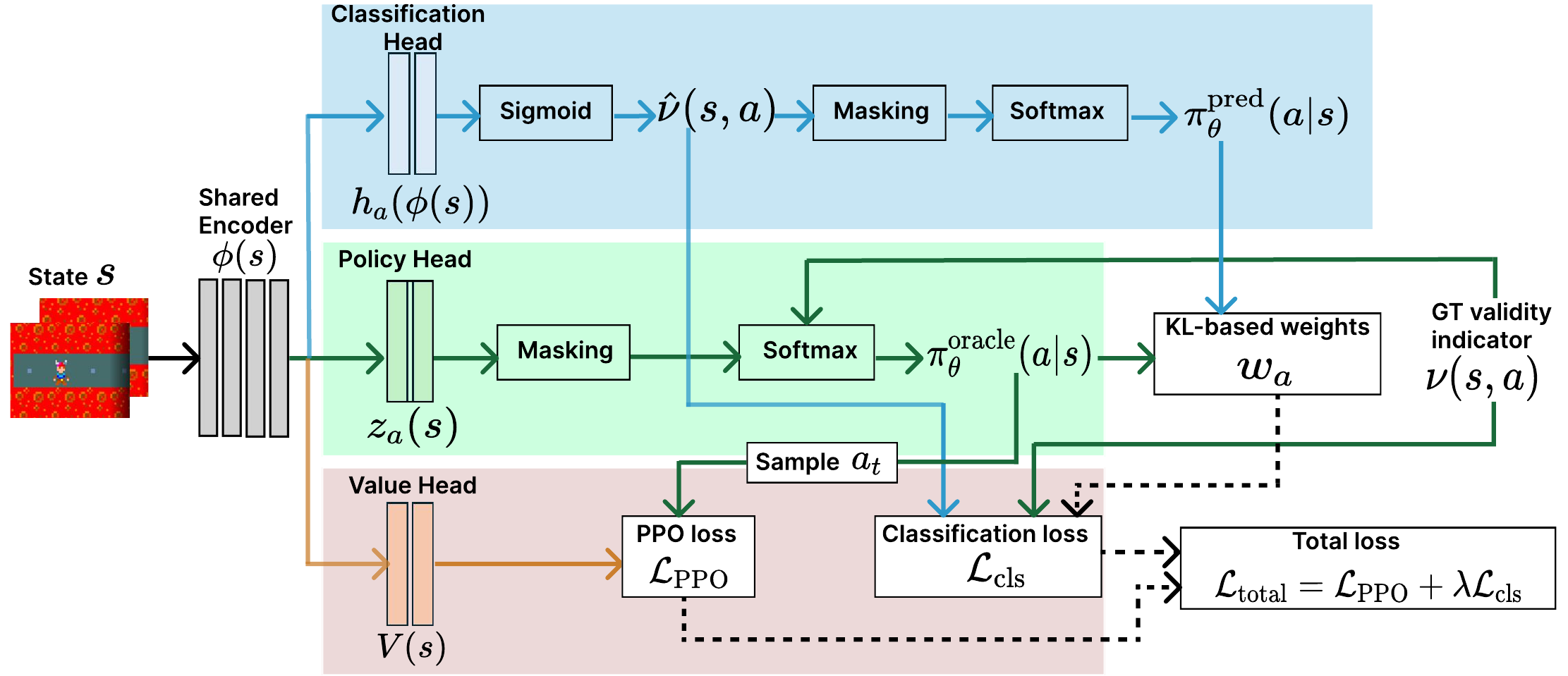}
    \caption{Architecture overview. A shared encoder $\phi(s)$ feeds three heads: a classification head (blue) predicting action validity $\hat{\nu}(s,a)$ via sigmoid, a policy head (green) producing oracle-masked actions $\pi^{\text{oracle}}_\theta(a \mid s)$, and a value head (orange) estimating $V(s)$. The predicted validity constructs a predicted policy $\pi^{\text{pred}}_\theta(a \mid s)$. The KL divergence between $\pi^{\text{oracle}}$ and $\pi^{\text{pred}}$ yields per-action weights $w_a$ for the classification loss (Eq.~\ref{eq:kl_weight}). The total loss combines PPO and weighted classification objectives (Eq.~\ref{eq:total_loss}). Dashed arrows indicate loss aggregation. At deployment, there are two modes: (i) with oracle masks available, classification heads can be discarded; (ii) without oracle masks, the learned predictor provides validity estimates for deployment.}
    \label{fig:architecture}
\end{figure}

Condition~(ii) states that suppression occurs when the representation $\phi(s^*)$ aligns with visited states. To break this alignment, we need features that distinguish states where an action is valid from states where it is invalid. The key insight is that predicting validity provides self-supervision. Learning to recognize when \textsf{descend} is valid at a staircase requires developing features that detect staircases, which naturally differ from corridor features where \textsf{descend} is invalid. This auxiliary task reshapes the representation to reduce correlation between valid and invalid states without requiring visitation of $s^*$ during training.

\paragraph{Feasibility Classification.}
Figure~\ref{fig:architecture} shows the architecture. We augment the policy network with lightweight classification heads that predict, for each action, whether that action is valid in the current state. For each action $a \in \mathcal{A}$, a binary classification head $h_a$ sits on top of the shared encoder $\phi$:
\begin{equation}
    \hat{\nu}(s, a) = \sigma\!\bigl(h_a(\phi(s))\bigr) \approx \nu(s, a),
\end{equation}
where $\sigma$ denotes the sigmoid. The encoder $\phi$ is shared with the policy and value networks, so gradients from the classification loss directly shape the representation. The heads $\{h_a\}_{a \in \mathcal{A}}$ are independent linear layers.

During training, we use ground-truth validity labels $\nu(s, a)$ from the same validity function used for masking (Section~\ref{sec:preliminaries}), evaluated on states drawn from the replay buffer $\mathcal{D}$. The classification loss augments the standard PPO objective:
\begin{equation}\label{eq:total_loss}
    \mathcal{L}_{\text{total}} = \mathcal{L}_{\text{PPO}} + \lambda \cdot \mathcal{L}_{\text{cls}},
\end{equation}
where $\lambda > 0$ controls the auxiliary strength.

This setup enables two deployment modes. When oracle masks are available at test time, the classification heads can be discarded and the policy uses ground-truth masking as usual. When oracle masks are unavailable, the learned predictor substitutes for them. The classification heads output $\hat{\nu}(s,a)$, which thresholded at $\tau$ (we use $\tau = 0.5$) provides predicted masks for deployment (Section~\ref{sec:rq4}).

\paragraph{Classification Objectives.}
Valid actions are rare: at any given state, only a small fraction of the action space is typically valid. Standard cross-entropy loss wastes gradient on the easy negatives (clearly invalid actions) and struggles with the hard negatives (edge cases where the classifier is uncertain). We develop two objectives that progressively focus learning on what matters for the policy.

\textbf{Focal loss} \citep{lin2017focal} addresses class imbalance by downweighting well-classified examples:
\begin{equation}\label{eq:focal}
    \mathcal{L}_{\text{focal}} = -\mathbb{E}_{s \sim \mathcal{D}} \left[\, \frac{1}{n} \sum_{a \in \mathcal{A}} (1 - p_a(s))^\gamma \log p_a(s) \,\right],
\end{equation}
where $p_a(s)$ is the predicted probability of the true label and $\gamma \geq 0$ is the focal parameter. For $\gamma > 0$, easy examples (where $p_a(s) \approx 1$) receive near-zero weight, concentrating gradient on hard negatives.

However, focal loss treats all actions equally. An error on an action the policy would never choose has no downstream effect, while an error on a high-probability action can severely distort the policy. This motivates our KL-balanced objective.

\textbf{KL-balanced loss} weights each action by its policy sensitivity: how much the policy would change if that action's validity were misclassified. Consider the policy using predicted versus oracle masks:
\begin{equation}
    \pi^{\text{pred}}_\theta(a \mid s) = \frac{\exp(z_a(s)) \cdot \mathbf{1}[\hat{\nu}(s, a) > \tau]}{\sum_j \exp(z_j(s)) \cdot \mathbf{1}[\hat{\nu}(s, j) > \tau]}.
\end{equation}
The KL divergence between $\pi^{\text{oracle}}$ and $\pi^{\text{pred}}$ measures the policy's sensitivity to the validity prediction of action $a$:
\begin{equation}\label{eq:kl_weight}
    w_a(s) = \pi_\theta(a \mid s) \left| \log \pi^{\text{oracle}}_\theta(a \mid s) - \log \pi^{\text{pred}}_\theta(a \mid s) \right|.
\end{equation}
To prevent infinite weights when invalid actions would have zero probability under either policy, we use a soft mask of $-20$ instead of $-\infty$ for invalid actions, yielding probabilities near zero but strictly positive. This ensures the KL divergence remains finite while preserving the weighting behavior.

The KL-balanced loss replaces uniform $1/n$ weighting with normalized KL weights:
\begin{equation}\label{eq:kl_balanced}
    \mathcal{L}_{\text{KL}} = -\mathbb{E}_{s \sim \mathcal{D}} \left[\, \sum_{a \in \mathcal{A}} \tilde{w}_a(s)\, (1 - p_a(s))^\gamma \log p_a(s) \,\right],
\end{equation}
where $\tilde{w}_a(s) = w_a(s) / \sum_{b \in \mathcal{A}} w_b(s)$. The focal parameter $\gamma$ is retained to downweight easy examples within the reweighted distribution.

The weighting mechanism is intuitive. If action $a$ carries high policy mass and the classifier predicts it invalid, the predicted policy will incorrectly zero it out, causing large KL divergence from the oracle. The KL-balanced loss assigns high classification weight to this state-action pair, inducing features that correctly distinguish validity. Conversely, actions that the policy would ignore regardless of validity contribute little to the KL and receive low weight. This focuses representation learning on validity predictions that actually affect behavior.

%% file: sections/06_experiments.tex
\section{Experiments}
\label{sec:experiments}

We structure evaluation around four questions that progressively build from identifying the problem to validating the solution:
\textbf{(i)} Does unmasked training induce exponential suppression of valid actions at unvisited states?
\textbf{(ii)} Does oracle masking increase representational entanglement, and can classification mitigate this?
\textbf{(iii)} Does classification improve performance even when oracle masks are available?
\textbf{(iv)} Can classification enable deployment without oracle masks?

\textbf{Setup.} We evaluate on Craftax and Craftax-Classic (43 actions) and MiniHack Corridor-5 (11 actions). These environments contain \emph{critical but rare} actions: \textsf{descend} (valid only at staircases) and \textsf{open\_door} (valid only adjacent to closed doors). Such actions are necessary for task completion, yet applicable at only a small fraction of states---exactly the regime where Theorem~\ref{thm:prob_suppression} predicts suppression. All environments expose ground-truth validity functions, which we use both for oracle masking and for supervision of the feasibility classifier during training.

We compare four conditions: \textbf{C1} Masked (oracle $\nu(s,a)$), \textbf{C2} Unmasked (full action space), \textbf{C3} Masked + Focal ($\lambda=10$), and \textbf{C4} Masked + KL-Balanced ($\lambda=10$). We evaluate PPO-Hybrid (S5 + Transformer-XL), PPO-RNN (GRU), and PPO-MLP architectures. MiniHack experiments additionally use RND for exploration. Evaluation uses the stochastic policy without exploration bonuses. See Appendix~\ref{app:training} for hyperparameters and Appendix~\ref{app:ablations} for sensitivity to $\lambda$ and $\gamma$.

\subsection{Exponential Suppression at Unvisited States (RQ1)}
\label{sec:rq1}

Figure~\ref{fig:rq1_suppression} examines two actions that are rarely valid and necessary for task completion: \textsf{descend} in Craftax (valid only at staircases) and \textsf{open\_door} in MiniHack Corridor-5 (valid only adjacent to closed doors). In Craftax, $\pi(\textsf{descend} \mid \text{valid})$ drops from uniform initialization ($1/43 \approx 0.023$) to below $10^{-4}$ within 50M frames---nearly two orders of magnitude, consistent with Theorem~\ref{thm:prob_suppression}'s exponential decay bound. The valid-action selection rate mirrors this collapse: unmasked agents select valid actions at fewer than 15\% of timesteps during early training. In Corridor-5, $\pi(\textsf{open\_door} \mid \text{valid})$ drops from $1/11 \approx 0.09$ to below $10^{-3}$, with similarly suppressed valid selection rates.

Oracle masking (C1, blue) consistently maintains high probability for both actions at valid states, approximately 0.5 to 1.0, with 100\% valid-action selection throughout. Masking decouples visited-state updates from unvisited-state probabilities.

\textbf{Recovery dynamics.} Suppression creates a sample efficiency bottleneck. In Craftax, $\pi(\textsf{descend})$ remains below a likelihood of $10^{-2}$ for over 100M frames, preventing the agent from reaching deeper dungeon floors regardless of value function quality. Recovery only begins when the agent finally visits staircases frequently enough to relearn the action's value. This delay between initial suppression and eventual recovery consumes the majority of training budget in sparse-reward environments. Theorem~\ref{thm:prob_suppression} characterizes the suppression phase; recovery depends on exploration, value accuracy, and valid-state density.

These results demonstrate that suppression is a systematic failure mode across architectures and environments (Table~\ref{tab:mask_removal}), not an artifact of model capacity or hyperparameter tuning. The effect appears consistently across PPO-MLP, PPO-RNN, and PPO-Hybrid architectures. For example, reaching a return of 1.0 requires 84\% of training frames for unmasked PPO-Hybrid versus only 63\% for oracle masking---a 21 percentage point gap in sample efficiency. While masking eliminates suppression at the policy level, it leaves the encoder's internal representation of validity unchanged, motivating our next question: what happens to representations?

\begin{figure}[t!]
    \centering
    \includegraphics[width=\textwidth]{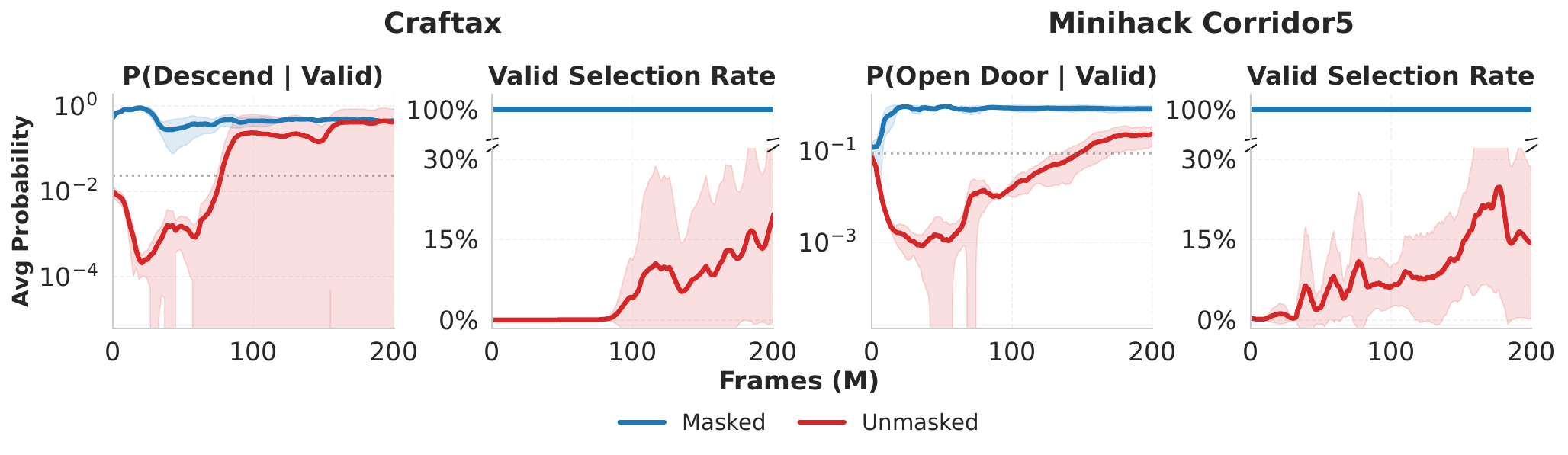}
    \caption{\textbf{Action suppression in Craftax and MiniHack Corridor-5 (PPO-Hybrid).} Left: probability of rare critical actions at valid states (log scale). Right: fraction of timesteps with valid actions selected. Unmasked training (C2, red) exhibits exponential suppression, while oracle masking (C1, blue) prevents collapse. Dashed line: uniform initialization.}
    \label{fig:rq1_suppression}
\end{figure}

\subsection{Feature Correlation Validates Condition~(ii) (RQ2)}
\label{sec:rq2}

Theorem~\ref{thm:prob_suppression} predicts suppression depends on feature correlation between valid and invalid states (Condition~(ii)). We measure this directly via Pearson correlation between encoder activations $\phi(s)$ from states where the target action is valid versus invalid.

\textbf{Oracle masking maintains entanglement.} Masked training (C1) preserves high correlation ($\approx$0.8--1.0) throughout training (Figure~\ref{fig:rq2_correlation}). Because masking resolves validity at the policy level, the encoder receives no signal to distinguish valid from invalid states. Representations stay correlated, precisely the regime where suppression would occur were masks removed.

\textbf{Unmasked training separates accidentally.} Without masking, correlation drops for individual actions; \textsf{descend} falls to approximately 0.3, for example. But this separation is a \emph{byproduct of policy collapse}, not learned validity structure. The encoder separates states because the policy has already suppressed the action, not because it learned to predict validity.

\textbf{Classification induces intentional separation.} Adding KL-balanced classification to masked training reduces correlation to $\approx$0.4 while masking prevents policy-level suppression. The classification loss provides explicit supervision for validity prediction, reshaping representations independently of the masking mechanism.

This reduction in correlation is what enables deployment without oracle masks (Section~\ref{sec:rq4}). Oracle masking alone maintains high correlation ($\approx$0.8) and fails catastrophically when masks are removed. Classification reduces correlation to $\approx$0.4, yielding learned predictors with 99\% accuracy (Appendix~\ref{app:mask_predictor_ablation}).

\begin{figure}[t!]
    \centering
    \includegraphics[width=\textwidth]{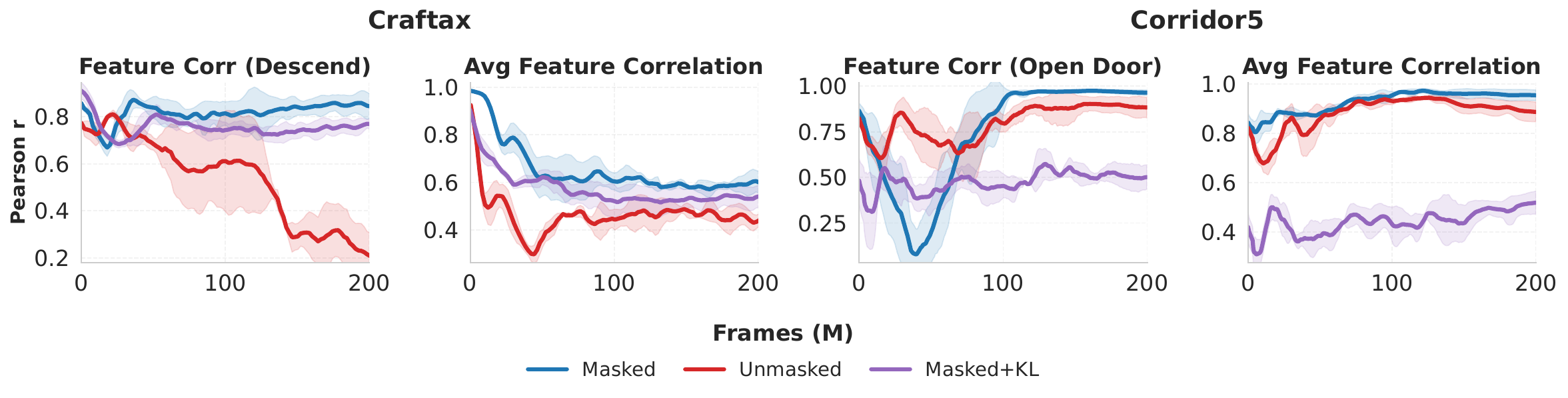}
    \caption{\textbf{Representational correlation between valid and invalid states (PPO-Hybrid).} Oracle masking preserves highly entangled representations, while KL-balanced classification induces validity-aware features without reintroducing policy-level suppression.}
    \label{fig:rq2_correlation}
\end{figure}

\subsection{Beyond Oracle Masking (RQ3)}
\label{sec:rq3}

Even with oracle masks at training time, classification improves performance. On Craftax-Hybrid, Masked + KL achieves 48.8 return versus 45.6 for masked only---a 7\% gain. This reflects the representational difference shown in Figure~\ref{fig:rq2_correlation}: masking prevents behavioral suppression but maintains correlated features, while classification induces validity-discriminating features that benefit policy learning.

The pattern holds across all tested architectures (Table~\ref{tab:mask_removal}). KL-balanced matches or exceeds oracle masking not only in final returns but in training efficiency: Masked + KL requires nearly the same fraction of frames to reach return 1.0 as oracle masking (65\% versus 63\% for PPO-Hybrid), while unmasked training lags substantially (84\%). Episode lengths reveal further differences in learned policy quality. In Craftax, oracle masking produces more efficient policies (80.7 steps per episode versus 94.6 for unmasked), suggesting that valid action selection enables more direct trajectories to rewards. In MiniHack Corridor-5, where the task is simpler and all methods achieve near-optimal returns (0.9--1.0), these differences become negligible (all around 52 steps), indicating that suppression effects are most pronounced in complex environments with sparse critical actions.

Focal loss alone exhibits higher variance and did not consistently improve over masking, highlighting the importance of policy-aware weighting. The gains from KL-balanced classification are functionally meaningful: on Craftax-Classic (max return $\approx$21), the gap between 19.8 (unmasked) and 20.5 (Masked + KL) represents the difference between occasional diamond collection ($\sim$20\% of episodes) and consistent collection ($\sim$60\%).

\subsection{Deployment Without Oracle Masks (RQ4)}
\label{sec:rq4}

When oracle masks are unavailable at test time, masked-only agents collapse to $-0.9$ return (Table~\ref{tab:mask_removal}, Pred columns). RQ2 showed why: masking prevents the encoder from learning validity-discriminating features. The policy has never encountered a validity prediction task.

KL-balanced classification solves this issue. In Craftax-Hybrid, Masked + KL achieves 43.2 return under predicted masks versus 43.9 for unmasked training with ground-truth masks. The key difference is training dynamics: unmasked training suffers suppression for 100M+ frames before recovery, while Masked + KL maintains stable valid-action probabilities throughout training and deploys immediately without oracle masks. The pattern is consistent across architectures: PPO-MLP with Masked + KL achieves 28.6 under predicted masks versus $-0.9$ for masked only, and PPO-RNN achieves 36.4 versus $-0.9$.

The MiniHack Corridor-5 results reveal a striking phenomenon: even in an environment where all methods achieve near-optimal returns under ground-truth evaluation (0.9--1.0), oracle masking fails catastrophically when masks are removed (dropping to 0.1--0.3). This reveals that masking's success hides a fundamental representational deficiency. The policy appears to work only because the oracle intervenes at every step. Without this intervention, the agent has no notion of when \textsf{open\_door} is appropriate. KL-balanced classification enables robust deployment because it learns validity structure rather than relying on external guidance.

\textbf{Practical benefit.} Predicted-mask deployment provides an explicit validity model that practitioners can inspect, audit, and selectively correct. Where the classifier errs, targeted human intervention can patch specific failure modes---an impossibility with oracle masking, which offers no learned validity representation.

\begin{table*}[t!]
\centering
\scriptsize
\setlength{\tabcolsep}{4pt}
\renewcommand{\arraystretch}{1.10}
\begin{tabular}{@{}lcccccccc@{}}
\toprule
 & \multicolumn{2}{c}{Craftax} & \multicolumn{2}{c}{Classic} & \multicolumn{4}{c}{Corridor5} \\
\cmidrule(lr){2-3}
\cmidrule(lr){4-5}
\cmidrule(lr){6-9}
 & GT & Pred & GT & Pred & GT & Pred & Frames$\to$1.0 (\%) & Ep.\ Len \\
\midrule
\multicolumn{9}{l}{\textbf{PPO-MLP}} \\
\addlinespace[1pt]
C2: Unmasked & 26.9 {\scriptsize$\pm$0.5} & 26.9 {\scriptsize$\pm$0.5} & 19.2 {\scriptsize$\pm$0.0} & 19.2 {\scriptsize$\pm$0.0} & 0.9 {\scriptsize$\pm$0.0} & 0.9 {\scriptsize$\pm$0.0} & --- & 161.3 {\scriptsize$\pm$20.2} \\
C1: Masked & \textbf{30.4 {\scriptsize$\pm$0.2}} & -0.9 {\scriptsize$\pm$0.0} & 19.3 {\scriptsize$\pm$0.0} & -0.8 {\scriptsize$\pm$0.0} & 0.9 {\scriptsize$\pm$0.0} & 0.1 {\scriptsize$\pm$0.0} & --- & \textbf{118.3 {\scriptsize$\pm$8.6}} \\
C3: Masked + Focal & 24.6 {\scriptsize$\pm$2.9} & 13.7 {\scriptsize$\pm$1.5} & 19.4 {\scriptsize$\pm$0.0} & 15.3 {\scriptsize$\pm$0.6} & \textbf{0.9 {\scriptsize$\pm$0.0}} & \textbf{0.9 {\scriptsize$\pm$0.0}} & --- & \textbf{113.6 {\scriptsize$\pm$8.1}} \\
C4: Masked + KL & \textbf{28.7 {\scriptsize$\pm$3.2}} & \textbf{28.6 {\scriptsize$\pm$3.2}} & \textbf{19.5 {\scriptsize$\pm$0.0}} & \textbf{19.3 {\scriptsize$\pm$0.1}} & \textbf{0.9 {\scriptsize$\pm$0.0}} & \textbf{0.9 {\scriptsize$\pm$0.0}} & --- & \textbf{123.6 {\scriptsize$\pm$6.6}} \\
\addlinespace[2pt]
\midrule
\multicolumn{9}{l}{\textbf{PPO-RNN}} \\
\addlinespace[1pt]
C2: Unmasked & 36.7 {\scriptsize$\pm$0.1} & 36.7 {\scriptsize$\pm$0.1} & 19.8 {\scriptsize$\pm$0.0} & 19.8 {\scriptsize$\pm$0.0} & 1.0 {\scriptsize$\pm$0.0} & 1.0 {\scriptsize$\pm$0.0} & 29 {\scriptsize$\pm$2} & 52.0 {\scriptsize$\pm$0.9} \\
C1: Masked & 38.3 {\scriptsize$\pm$0.2} & -0.9 {\scriptsize$\pm$0.0} & 20.4 {\scriptsize$\pm$0.0} & -0.9 {\scriptsize$\pm$0.0} & \textbf{1.0 {\scriptsize$\pm$0.0}} & 0.2 {\scriptsize$\pm$0.1} & \textbf{16 {\scriptsize$\pm$2}} & \textbf{52.1 {\scriptsize$\pm$1.1}} \\
C3: Masked + Focal & \textbf{39.2 {\scriptsize$\pm$0.1}} & 27.5 {\scriptsize$\pm$1.2} & 20.3 {\scriptsize$\pm$0.0} & 14.8 {\scriptsize$\pm$0.2} & \textbf{1.0 {\scriptsize$\pm$0.0}} & \textbf{1.0 {\scriptsize$\pm$0.0}} & 21 {\scriptsize$\pm$2} & \textbf{52.0 {\scriptsize$\pm$0.7}} \\
C4: Masked + KL & 38.1 {\scriptsize$\pm$0.1} & \textbf{36.4 {\scriptsize$\pm$0.2}} & \textbf{20.5 {\scriptsize$\pm$0.1}} & \textbf{20.4 {\scriptsize$\pm$0.1}} & \textbf{1.0 {\scriptsize$\pm$0.0}} & \textbf{1.0 {\scriptsize$\pm$0.0}} & \textbf{21 {\scriptsize$\pm$3}} & \textbf{53.8 {\scriptsize$\pm$2.7}} \\
\addlinespace[2pt]
\midrule
\multicolumn{9}{l}{\textbf{PPO-Hybrid}} \\
\addlinespace[1pt]
C2: Unmasked & 43.9 {\scriptsize$\pm$0.4} & 43.9 {\scriptsize$\pm$0.4} & 19.2 {\scriptsize$\pm$0.1} & 19.2 {\scriptsize$\pm$0.1} & 1.0 {\scriptsize$\pm$0.0} & 1.0 {\scriptsize$\pm$0.0} & 84 {\scriptsize$\pm$11} & 94.6 {\scriptsize$\pm$7.9} \\
C1: Masked & 45.6 {\scriptsize$\pm$1.3} & -0.9 {\scriptsize$\pm$0.0} & 18.9 {\scriptsize$\pm$0.0} & -0.9 {\scriptsize$\pm$0.0} & \textbf{1.0 {\scriptsize$\pm$0.0}} & 0.3 {\scriptsize$\pm$0.0} & \textbf{63 {\scriptsize$\pm$12}} & \textbf{80.7 {\scriptsize$\pm$4.3}} \\
C3: Masked + Focal & 45.0 {\scriptsize$\pm$2.3} & 21.2 {\scriptsize$\pm$1.7} & \textbf{19.0 {\scriptsize$\pm$0.1}} & 17.6 {\scriptsize$\pm$0.5} & \textbf{1.0 {\scriptsize$\pm$0.0}} & \textbf{1.0 {\scriptsize$\pm$0.0}} & \textbf{76 {\scriptsize$\pm$8}} & \textbf{92.0 {\scriptsize$\pm$10.9}} \\
C4: Masked + KL & \textbf{48.8 {\scriptsize$\pm$0.7}} & \textbf{43.2 {\scriptsize$\pm$1.1}} & \textbf{19.0 {\scriptsize$\pm$0.1}} & \textbf{18.4 {\scriptsize$\pm$0.1}} & \textbf{1.0 {\scriptsize$\pm$0.0}} & \textbf{1.0 {\scriptsize$\pm$0.0}} & \textbf{65 {\scriptsize$\pm$5}} & \textbf{88.0 {\scriptsize$\pm$7.8}} \\
\bottomrule
\end{tabular}
\caption{\textbf{Evaluation with ground-truth (GT) versus predicted (Pred) masks.} Masked-only agents fail when oracle masks are removed. KL-balanced classification enables near-oracle performance under predicted masks across architectures. Results report mean $\pm$ std over 4 seeds (10,240 episodes per seed); bold indicates best non-baseline per column.}
\label{tab:mask_removal}
\vspace{-1em}
\end{table*}

%% file: sections/07_conclusion.tex
\section{Conclusion and Limitations}
\label{sec:conclusion}

Action masking prevents valid action suppression, but it introduces a practical deployment dilemma. Practitioners must either maintain costly validity oracles at deployment or accept policy degradation. We proved that suppression grows exponentially through shared parameters, with rarely-valid actions affected most severely. Feasibility classification addresses this failure by restoring a gradient signal that enables the learning of validity-discriminating features. This yields a practical training-deployment strategy: train with oracle masking for stability, then substitute the learned predictor at test time. Our KL-balanced loss outperforms uniform weighting and enables deployment without oracle masks at minimal performance cost. In practice, classification accuracy on rarely-valid actions should be prioritized, as these actions are most vulnerable to suppression.

Several limitations remain. Our theoretical analysis assumes fixed state features $\phi(s)$. Although Section~\ref{sec:rq2} empirically validates that deep networks exhibit the feature alignment Condition~(ii), we do not analyze how joint optimization of the representation $\phi$ and the policy weights alters suppression dynamics. Deployment with predicted masks recovers unmasked performance but does not exceed it, as the policy is trained under oracle masking and experiences a distribution shift when evaluated on predicted masks. Nonetheless, matching unmasked performance while providing an explicit validity model offers meaningful interpretability benefits. Practitioners can inspect disagreements between the learned predictor and the oracle, identifying potential failures prior to deployment. They can also intervene to correct mispredictions during deployment. This mirrors concept-bottleneck interventions in multi-agent settings \citep{zabounidis2023concept}.

One natural extension is to train a separate policy head via behavioral cloning that relies exclusively on predicted masks throughout training, thereby reducing the train-deploy distribution shift. When validity oracles are inexpensive at test time, the primary benefit of feasibility classification is representational. KL-balanced classification improves performance even when oracle masks are available (Section~\ref{sec:rq3}). The deployment advantage is most pronounced in sim-to-real transfer, where validity information is readily available in simulation but costly or unavailable on physical hardware.

Our analysis assumes access to ground-truth validity labels during training. Noisy or partially specified validity functions would likely degrade the learned predictor. Our evaluation is limited to discrete action spaces with symbolic state representations. Continuous action spaces or pixel-based observations may exhibit different feature-alignment dynamics. The suppression mechanism may also interact with other known failure modes in deep reinforcement learning, including representation collapse, loss of plasticity, and primacy bias.

%% file: appendix_sections/appendix_proofs.tex
\section{Supporting Lemmas and Proofs}
\label{app:proofs}

\subsection{Logit Gradient Identity}

\begin{lemma}[Expected logit update]\label{lem:logit_update}
Under softmax parameterization, the policy gradient induces the following expected update to the logit of action $a$ at state $s$:
\begin{equation}\label{eq:logit_update}
    \mathbb{E}_{a' \sim \pi(\cdot \mid s)}\!\left[ A^\pi(s, a') \cdot \frac{\partial \log \pi(a' \mid s)}{\partial z_a(s)} \right] = \pi(a \mid s) \cdot A^\pi(s, a).
\end{equation}
Marginalizing over the state visitation gives the full update direction:
\begin{equation}\label{eq:logit_update_full}
    \mathbb{E}_{s \sim d_\pi}\!\left[\, \pi(a \mid s)\, A^\pi(s, a) \,\right].
\end{equation}
\end{lemma}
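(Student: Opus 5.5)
We prove the lemma by differentiating the softmax log-probability with respect to the logit $z_a(s)$ and then using the fact that advantages average to zero under the policy. Since $\log \pi(a' \mid s) = z_{a'}(s) - \log \sum_{j} \exp(z_j(s))$, the Jacobian of the log-softmax is
\begin{equation*}
    \frac{\partial \log \pi(a' \mid s)}{\partial z_a(s)} = \mathbf{1}[a' = a] - \pi(a \mid s).
\end{equation*}
Substituting this into the left-hand side of \eqref{eq:logit_update} and using linearity of expectation gives two terms. The indicator term contributes $\pi(a \mid s)\, A^\pi(s,a)$. The second term contributes $-\pi(a \mid s)\, \mathbb{E}_{a' \sim \pi(\cdot\mid s)}[A^\pi(s,a')]$.

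The one substantive step is to show that this second term vanishes. For every $s$,
\begin{equation*}
    \mathbb{E}_{a'\sim\pi(\cdot \mid s)}[A^\pi(s,a')] = \mathbb{E}_{a'}[Q^\pi(s,a')] - V^\pi(s) = 0,
\end{equation*}
because $A^\pi = Q^\pi - V^\pi$ and $V^\pi(s) = \sum_{a'} \pi(a'\mid s) Q^\pi(s,a')$. This baseline-invariance identity is also the one that makes softmax logit updates zero-sum across actions. It holds regardless of whether action $a$ is valid, since the unmasked policy puts positive mass on every action. The first claim \eqref{eq:logit_update} follows.

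For \eqref{eq:logit_update_full}, we invoke the policy gradient theorem. Up to the normalization factor $(1-\gamma)^{-1}$, it gives $\nabla \mathcal{J} \propto \mathbb{E}_{s\sim d_\pi}\mathbb{E}_{a'\sim\pi(\cdot\mid s)}[A^\pi(s,a')\nabla \log\pi(a'\mid s)]$. The component of this update along the logit $z_a(s)$ is obtained by applying the per-state identity inside the outer expectation over $s \sim d_\pi$, which yields $\mathbb{E}_{s\sim d_\pi}[\pi(a\mid s)A^\pi(s,a)]$.

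In the linear model $z_a(s)=\phi(s)^\top w_a$, the chain rule gives $\partial z_a(s)/\partial w_a = \phi(s)$. So the update to $w_a$ is $\mathbb{E}_{s\sim d_\pi}[\pi(a\mid s)A^\pi(s,a)\phi(s)]$, which is the form that Proposition~\ref{prop:suppression} and Theorem~\ref{thm:prob_suppression} use.

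The main obstacle is interpretive rather than technical. Equation \eqref{eq:logit_update_full} should be read as the direction of the update in logit/parameter space, aggregated over states with weights $d_\pi(s)$, and not as a literal gradient with respect to a single state's logit. We will state this convention explicitly, including the absorbed $(1-\gamma)^{-1}$ factor and the step size $\eta$.
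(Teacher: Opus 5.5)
Your proposal is correct and follows essentially the same route as the paper: differentiate the log-softmax to get $\mathbf{1}\{a'=a\}-\pi(a\mid s)$, eliminate the second term using $\sum_{a'}\pi(a'\mid s)A^\pi(s,a')=0$, and then take the outer expectation over $s\sim d_\pi$. Your explicit justification of the zero-mean advantage via $V^\pi(s)=\sum_{a'}\pi(a'\mid s)Q^\pi(s,a')$, and your note that the marginalized expression is an aggregated update direction (absorbing $(1-\gamma)^{-1}$ and $\eta$), are useful clarifications that the paper leaves implicit. One small point: the zero-mean identity holds for any policy, so your remark about the unmasked policy putting positive mass on every action is harmless but not needed.
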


\begin{proof}
The derivative of the softmax log-probability is
\begin{equation}
    \frac{\partial \log \pi(a' \mid s)}{\partial z_a(s)} = \mathbf{1}\{a' = a\} - \pi(a \mid s).
\end{equation}
Taking the expectation over $a' \sim \pi(\cdot \mid s)$:
\begin{align}
    \mathbb{E}_{a'}\!\left[ A^\pi(s, a') \left(\mathbf{1}\{a' = a\} - \pi(a \mid s)\right) \right]
    &= \pi(a \mid s) \cdot A^\pi(s, a) - \pi(a \mid s) \sum_{a' \in \mathcal{A}} \pi(a' \mid s)\, A^\pi(s, a') \notag \\
    &= \pi(a \mid s) \cdot A^\pi(s, a)
\end{align}
since $\sum_{a'} \pi(a' \mid s)\, A^\pi(s, a') = 0$. Equation~\eqref{eq:logit_update_full} follows by taking the outer expectation over $s \sim d_\pi$.
\end{proof}

\subsection{Zero-Sum Logit Updates}

\begin{lemma}[Zero-sum identity]\label{lem:zero_sum}
Under the linear parameterization $z_j(s) = \phi(s)^\top w_j$ with per-action parameters $w_j$, the sum of logit updates across all actions is zero at every state and every step:
\begin{equation}
    \sum_{j=1}^n \Delta z_j^{(\tau)}(s) = 0 \quad \text{for all } s, \tau.
\end{equation}
Consequently, $\sum_j e^{\sigma_j^{(T)}} \geq n$ for all $T$, where $\sigma_j^{(T)} = \sum_{\tau=0}^{T-1} \Delta z_j^{(\tau)}(s)$ is the cumulative logit change from uniform initialization.
\end{lemma}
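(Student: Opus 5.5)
The plan is to derive the identity from the shift-invariance of the softmax, not from the explicit form of the policy gradient, so that it covers any update rule in use: vanilla policy gradient, PPO's clipped surrogate, and the entropy bonus. Then Jensen's inequality gives the second claim.

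\emph{Step 1: per-action parameter updates sum to zero.} Let $J(w_1,\dots,w_n)$ be any training objective that depends on the parameters only through the policies $\pi(\cdot\mid s)$. This covers the expected return, sample-based surrogates such as the PPO objective, and the entropy term $\beta\,\mathcal{H}(\pi(\cdot\mid s))$. For any $u\in\mathbb{R}^d$, replace every $w_j$ by $w_j+u$. Every logit at every state then shifts by the same amount $\phi(s)^\top u$, so every softmax distribution, and hence $J$, is unchanged. Differentiating $J(w_1+tu,\dots,w_n+tu)$ at $t=0$ gives
\[
u^\top \sum_{j=1}^n \nabla_{w_j} J = 0
\]
for all $u$. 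Therefore $\sum_j \nabla_{w_j} J = 0$.

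As a sanity check, this can be verified directly using Lemma~\ref{lem:logit_update}'s derivative. We have $\nabla_{w_j}\log\pi(a'\mid s') = (\mathbf{1}\{a'=j\}-\pi(j\mid s'))\,\phi(s')$. Summing over $j$ gives $(1-1)\,\phi(s')=0$ for every sample, so every stochastic estimate also sums to zero.

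Since a gradient step sets $\Delta w_j^{(\tau)}=\eta\,\nabla_{w_j}J$ (or uses its sample estimate), we get $\sum_j \Delta w_j^{(\tau)}=0$ at every step $\tau$.

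\emph{Step 2: transfer to logits.} By linearity, $\Delta z_j^{(\tau)}(s)=\phi(s)^\top \Delta w_j^{(\tau)}$, so
\[
\sum_j \Delta z_j^{(\tau)}(s)=\phi(s)^\top\sum_j\Delta w_j^{(\tau)}=0
\]
for every $s$, visited or not. Summing over $\tau<T$ gives $\sum_j \sigma_j^{(T)}=0$.

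\emph{Step 3: the exponential-sum bound.} By convexity of $\exp$ (Jensen, or AM--GM),
\[
\frac1n\sum_j e^{\sigma_j^{(T)}}\ \ge\ \exp\!\Big(\frac1n\sum_j\sigma_j^{(T)}\Big)=1,
\]
so $\sum_j e^{\sigma_j^{(T)}}\ge n$.

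The main obstacle is Step 1: ensuring the claim holds for the actual update rule and not only for the exact policy gradient. The shift-invariance argument handles this provided two conditions hold. First, the logits must carry no action-specific bias beyond what is absorbed into $w_j$. Second, the optimizer must be plain (S)GD. A per-coordinate adaptive optimizer such as Adam would break the exact identity. I would state these as standing assumptions of the linear analysis. The value loss is harmless here, since it does not depend on $w_j$.
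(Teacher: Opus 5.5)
Your proof is correct. Steps~2 and~3 coincide with the paper's, but Step~1 takes a genuinely different route.

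The paper argues by direct computation. It takes the expected update $\Delta w_j=\eta\,\mathbb{E}_{s\sim d_\pi}[\pi(j\mid s)\,A^\pi(s,j)\,\phi(s)]$ from Lemma~\ref{lem:logit_update}, sums over $j$, and uses $\sum_j\pi(j\mid s)A^\pi(s,j)=0$. For stochastic updates it uses exactly the per-sample identity $\sum_j(\mathbf{1}\{a'=j\}-\pi(j\mid s))=0$ that you give as a sanity check. You instead get $\sum_j\nabla_{w_j}J=0$ from invariance of the softmax under the common shift $w_j\mapsto w_j+u$.

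Your route buys generality in one stroke. The identity holds for every differentiable objective that depends on the parameters only through $\pi_\theta$: expected return, PPO's clipped surrogate, minibatch estimates, and the entropy bonus. So the separate recomputation in Appendix~\ref{app:entropy_proof}, which re-establishes the zero-sum identity under entropy regularization, becomes an immediate corollary.

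Your route also makes explicit the optimizer assumption the paper leaves implicit when it writes $\Delta w_j=\eta\,\mathbb{E}[\cdots]$. The identity survives any update that rescales the whole gradient by a common scalar, such as learning-rate annealing or the global-norm clipping in the paper's training setup. It does not survive per-coordinate preconditioning such as Adam, as you note.

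The paper's computation, in exchange, works directly with the advantage-weighted expression $\pi(j\mid s)A^\pi(s,j)\phi(s)$. It needs that expression anyway for Proposition~\ref{prop:suppression} and Theorem~\ref{thm:prob_suppression}, so its route requires no additional machinery.
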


\begin{proof}
By Lemma~\ref{lem:logit_update}, the expected parameter update is $\Delta w_j = \eta\, \mathbb{E}_{s \sim d_\pi}[\pi(j \mid s)\, A^\pi(s, j)\, \phi(s)]$. Summing over actions:
\begin{equation}
    \sum_j \Delta w_j = \eta\, \mathbb{E}_{s \sim d_\pi}\!\left[\sum_j \pi(j \mid s)\, A^\pi(s, j)\, \phi(s)\right] = 0
\end{equation}
since $\sum_j \pi(j \mid s)\, A^\pi(s, j) = 0$ for every $s$. Therefore $\sum_j \Delta z_j^{(\tau)}(s) = \phi(s)^\top \sum_j \Delta w_j^{(\tau)} = 0$.

The same identity holds for stochastic gradient estimates. For a single sample $(s, a')$, the per-sample update direction for $w_j$ is proportional to $(\mathbf{1}\{a' = j\} - \pi(j \mid s))\, \phi(s)$. Summing: $\sum_j (\mathbf{1}\{a' = j\} - \pi(j \mid s)) = 1 - 1 = 0$.

For the partition function bound: since $\sum_j \sigma_j^{(T)} = 0$, Jensen's inequality convexity of the exponential with the uniform distribution over $n$ actions gives
\begin{equation}
    \frac{1}{n}\sum_j e^{\sigma_j^{(T)}} \geq e^{\frac{1}{n}\sum_j \sigma_j^{(T)}} = e^0 = 1,
\end{equation}
Thus $\sum_j e^{\sigma_j^{(T)}} \geq n$.
\end{proof}

\subsection{Invalid-Action Dominance Gap}

\begin{lemma}[Invalid-action dominance gap]\label{lem:dominance_gap}
Suppose that for all states $s \in \mathcal{S}_{\mathrm{vis}}$ and all invalid actions $a$ where $\nu(s, a) = 0$,
\begin{equation}
    Q^\pi(s, a) \leq \max_{b : \nu(s, b) = 1} Q^\pi(s, b) - \delta(s)
\end{equation}
for some $\delta(s) > 0$. Then
\begin{equation}\label{eq:advantage_bound}
    A^\pi(s, a) \leq -\delta(s)\,\bigl(1 - \pi(a \mid s)\bigr).
\end{equation}
\end{lemma}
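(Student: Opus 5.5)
The plan is to express $A^\pi(s,a)$ directly through the $Q$-values at $s$, factor out $1-\pi(a\mid s)$, and then use the dominance hypothesis on the remaining factor. Write $V^\pi(s)=\sum_{b}\pi(b\mid s)Q^\pi(s,b)$ and split off the term $b=a$:
\begin{equation*}
A^\pi(s,a)=Q^\pi(s,a)-V^\pi(s)=\bigl(1-\pi(a\mid s)\bigr)Q^\pi(s,a)-\sum_{b\neq a}\pi(b\mid s)Q^\pi(s,b).
\end{equation*}
When $\pi(a\mid s)<1$, define the renormalized distribution $\pi_{-a}(b\mid s)=\pi(b\mid s)/(1-\pi(a\mid s))$ on $b\neq a$, and let $\bar Q_{-a}(s)$ be the corresponding average of $Q^\pi(s,\cdot)$. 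This gives the exact identity
\begin{equation*}
A^\pi(s,a)=\bigl(1-\pi(a\mid s)\bigr)\bigl(Q^\pi(s,a)-\bar Q_{-a}(s)\bigr).
\end{equation*}
When $\pi(a\mid s)=1$, both sides of \eqref{eq:advantage_bound} are zero and there is nothing to prove. So \eqref{eq:advantage_bound} is equivalent to
\begin{equation*}
\bar Q_{-a}(s)\;\geq\;Q^\pi(s,a)+\delta(s).
\end{equation*}

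The second step is to extract this inequality from the hypothesis $Q^\pi(s,a)\le\max_{b:\nu(s,b)=1}Q^\pi(s,b)-\delta(s)$. Let $b^\star$ denote the maximizing valid action. The hypothesis says exactly that $Q^\pi(s,b^\star)\ge Q^\pi(s,a)+\delta(s)$. So the argument closes as soon as $\bar Q_{-a}(s)\ge Q^\pi(s,b^\star)$, or more weakly $\bar Q_{-a}(s)\ge Q^\pi(s,a)+\delta(s)$.

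I would first try to obtain this from the fact that the hypothesis holds simultaneously for \emph{every} invalid action at $s$. Every invalid $b\neq a$ then also sits at least $\delta(s)$ below $Q^\pi(s,b^\star)$. After that, I would try to control the contribution of the valid actions $b\neq b^\star$.

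This second step is the main obstacle, and I expect it to be where the statement as worded needs care. A maximum over valid actions does not in general dominate a $\pi$-weighted average that can put mass on other low-$Q$ actions. For example, a second invalid action with much smaller $Q$ and large policy mass makes $\bar Q_{-a}(s)$ fall below $Q^\pi(s,a)$, and then $A^\pi(s,a)>0$.

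So the literal implication can only be established if the other actions carrying $\pi_{-a}$-mass have $Q$-values at least $Q^\pi(s,a)+\delta(s)$. This holds, for instance, if $\pi_{-a}(\cdot\mid s)$ is concentrated on $b^\star$. I would therefore check carefully whether the author's proof invokes such a concentration property implicitly. If it does, I would state it explicitly as the step where the maximum is compared to the weighted average. Otherwise I would record that the identity above gives \eqref{eq:advantage_bound} exactly when $\bar Q_{-a}(s)\ge Q^\pi(s,a)+\delta(s)$.
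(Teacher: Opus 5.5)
Your diagnosis is correct, and the step you flagged is exactly where the paper's own proof breaks. The paper takes $b^*$ to be the maximizing valid action. It observes, correctly using the hypothesis for every invalid action, that $Q^\pi(s,b^*)\ge Q^\pi(s,a')$ for all $a'$. It then asserts
\[
A^\pi(s,a)\le Q^\pi(s,a)-\pi(a\mid s)\,Q^\pi(s,a)-(1-\pi(a\mid s))\,Q^\pi(s,b^*).
\]
But $Q^\pi(s,b^*)\ge Q^\pi(s,a')$ gives $\sum_{a'\neq a}\pi(a'\mid s)Q^\pi(s,a')\le(1-\pi(a\mid s))Q^\pi(s,b^*)$. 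Because this sum enters $A^\pi(s,a)$ with a minus sign, what actually follows is the reverse inequality $A^\pi(s,a)\ge(1-\pi(a\mid s))(Q^\pi(s,a)-Q^\pi(s,b^*))$. That is a lower bound, and it cannot be chained with the hypothesis. In your notation, the paper implicitly uses $\bar Q_{-a}(s)\ge Q^\pi(s,b^*)$, whereas $\bar Q_{-a}(s)\le Q^\pi(s,b^*)$ always holds here. No concentration assumption is invoked; the inequality is simply reversed.

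So the lemma is false as worded, and your counterexample is valid. The second invalid action satisfies the hypothesis automatically, since its $Q$-value lies below $Q^\pi(s,a)\le Q^\pi(s,b^*)-\delta(s)$. The failure does not even need a second invalid action. Take actions $(a,b^*,d)$ with $a$ invalid and $b^*,d$ valid, $Q^\pi(s,\cdot)=(1/2,\,1,\,0)$, $\delta(s)=1/2$, and $\pi(\cdot\mid s)=(0.1,\,0.1,\,0.8)$. This is realized by a one-step MDP with these rewards. The hypothesis holds (with equality), yet $V^\pi(s)=0.15$ and $A^\pi(s,a)=0.35>0$, whereas the claimed bound is $-0.45$.

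Your exact identity $A^\pi(s,a)=(1-\pi(a\mid s))(Q^\pi(s,a)-\bar Q_{-a}(s))$ gives the right repair. The conclusion holds whenever $\bar Q_{-a}(s)\ge Q^\pi(s,a)+\delta(s)$, for example if $Q^\pi(s,a)\le Q^\pi(s,b)-\delta(s)$ for every $b\neq a$ with $\pi(b\mid s)>0$. Lemma~\ref{lem:param_update}, Proposition~\ref{prop:suppression} and Theorem~\ref{thm:prob_suppression} use this lemma only to obtain $A^{\pi_\tau}(s,a)<0$ on $\mathcal{S}_{\mathrm{vis}}$. Condition~(i) therefore needs to be strengthened in the same way, or replaced by that sign assumption directly, for the downstream argument to stand.
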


\begin{proof}
Let $b^* = \arg\max_{b : \nu(s, b) = 1} Q^\pi(s, b)$. Since $Q^\pi(s, b^*) \geq Q^\pi(s, a')$ for all $a'$,
\begin{align}
    A^\pi(s, a) &= Q^\pi(s, a) - \sum_{a'} \pi(a' \mid s)\, Q^\pi(s, a') \notag \\
    &\leq Q^\pi(s, a) - \pi(a \mid s)\, Q^\pi(s, a) - (1 - \pi(a \mid s))\, Q^\pi(s, b^*) \notag \\
    &= (1 - \pi(a \mid s))\bigl(Q^\pi(s, a) - Q^\pi(s, b^*)\bigr) \leq -(1 - \pi(a \mid s))\,\delta(s).\qedhere
\end{align}
\end{proof}

\subsection{Parameter Updates Under Linear Logits}

\begin{lemma}[Expected parameter update]\label{lem:param_update}
Under the linear parameterization $z_a(s) = \phi(s)^\top w_a$ with gradient ascent at learning rate $\eta$, the expected update to $w_a$ is
\begin{equation}\label{eq:param_update}
    \Delta w_a = \eta \, \mathbb{E}_{s \sim d_\pi}\!\left[\, \pi(a \mid s)\, A^\pi(s, a)\, \phi(s) \,\right].
\end{equation}
If action $a$ is invalid at every state in $\mathcal{S}_{\mathrm{vis}}$ and condition in Equation~\eqref{eq:dominance} holds, then
\begin{equation}\label{eq:param_update_neg}
    \Delta w_a = -\eta \, \mathbb{E}_{s \sim d_\pi}\!\left[\, c(s)\, \phi(s) \,\right]
\end{equation}
where $c(s) = \pi(a \mid s)\, |A^\pi(s, a)| > 0$ for all $s \in \mathcal{S}_{\mathrm{vis}}$.
\end{lemma}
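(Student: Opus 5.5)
The statement to prove is Lemma~\ref{lem:param_update}. The first claim, Equation~\eqref{eq:param_update}, follows from the chain rule combined with Lemma~\ref{lem:logit_update}. Under the linear parameterization, $w_a$ enters only through the logit $z_a(s) = \phi(s)^\top w_a$, so $\partial z_a(s)/\partial w_a = \phi(s)$. The other logits $z_j$ for $j \neq a$ do not depend on $w_a$. The policy gradient with respect to $w_a$ is therefore
\[
\mathbb{E}_{s \sim d_\pi}\, \mathbb{E}_{a' \sim \pi(\cdot\mid s)}\!\left[ A^\pi(s,a')\, \frac{\partial \log \pi(a'\mid s)}{\partial z_a(s)} \right] \phi(s).
\]
We can pull $\phi(s)$ outside the inner expectation because it does not depend on $a'$. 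By Equation~\eqref{eq:logit_update}, the inner expectation equals $\pi(a\mid s)\,A^\pi(s,a)$. Multiplying by the learning rate $\eta$ gives Equation~\eqref{eq:param_update}. I will state explicitly that we use the advantage form of the policy gradient theorem, with $d_\pi$ the discounted visitation distribution. Any constant factor $1/(1-\gamma)$ is absorbed into $\eta$.

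For the second claim, fix any $s \in \mathcal{S}_{\mathrm{vis}}$. By hypothesis $a$ is invalid at $s$, so Condition~\eqref{eq:dominance} holds at $s$ with some margin $\delta(s) > 0$. Lemma~\ref{lem:dominance_gap} then gives $A^\pi(s,a) \le -\delta(s)\,(1-\pi(a\mid s))$. Because the softmax has full support, $0 < \pi(a\mid s) < 1$ whenever $n \ge 2$ and the logits are finite. Hence $A^\pi(s,a) < 0$ strictly, so $A^\pi(s,a) = -|A^\pi(s,a)|$. Substituting this into Equation~\eqref{eq:param_update} yields $\Delta w_a = -\eta\,\mathbb{E}_{s\sim d_\pi}[c(s)\phi(s)]$ with $c(s) = \pi(a\mid s)\,|A^\pi(s,a)|$. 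The expectation only involves states with $d_\pi(s) > 0$, that is, states in $\mathcal{S}_{\mathrm{vis}}$, where the sign argument applies. Positivity of $c(s)$ follows from $\pi(a\mid s) > 0$ together with $|A^\pi(s,a)| \ge \delta(s)(1-\pi(a\mid s)) > 0$.

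The only delicate point is the strict positivity. I need $\pi(a\mid s) < 1$ so that the factor $1-\pi(a\mid s)$ in Lemma~\ref{lem:dominance_gap} does not vanish, and $\pi(a\mid s) > 0$. Both follow from finite logits under the softmax with $n \ge 2$ actions, which holds throughout training since the parameters stay finite after finitely many steps. I will note this assumption explicitly rather than leave it implicit.
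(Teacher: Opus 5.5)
Your derivation of \eqref{eq:param_update} matches the paper's argument: the chain rule through $z_a(s)=\phi(s)^\top w_a$, followed by Lemma~\ref{lem:logit_update}. For the sign you take the same step the paper takes, invoking Lemma~\ref{lem:dominance_gap}. Your extra care about $0<\pi(a\mid s)<1$ and about the $1/(1-\gamma)$ factor is fine. The problem is that this step fails. Condition~\eqref{eq:dominance} does not imply $A^\pi(s,a)<0$, and the proof of Lemma~\ref{lem:dominance_gap} runs an inequality in the wrong direction. Write
\[
A^\pi(s,a)=\bigl(1-\pi(a\mid s)\bigr)Q^\pi(s,a)-\sum_{a'\neq a}\pi(a'\mid s)\,Q^\pi(s,a').
\]
The bound $Q^\pi(s,a')\le Q^\pi(s,b^*)$ controls the subtracted sum from above. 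It therefore gives only a lower bound, $A^\pi(s,a)\ge\bigl(1-\pi(a\mid s)\bigr)\bigl(Q^\pi(s,a)-Q^\pi(s,b^*)\bigr)$. Being dominated by the best valid action does not fix the sign of the advantage. The advantage compares $Q^\pi(s,a)$ with the policy-weighted average of all $Q$-values, including those of bad valid actions. So the point you single out as delicate, positivity of $\pi$, is not the real issue; the sign of $A^\pi(s,a)$ is.

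Here is a counterexample. Take a single state $s$ on which every action self-loops. Let $a$ be an invalid zero-reward no-op, and let $b,c$ be valid actions with rewards $1$ and $-2$. Use the uniform policy, which is the paper's initialization. All $Q^\pi(s,\cdot)$ carry the same shift $\gamma V^\pi(s)$, so \eqref{eq:dominance} holds with $\delta(s)=1$. Yet $A^\pi(s,a)=0-\tfrac13(0+1-2)=\tfrac13>0$. Here $\mathcal{S}_{\mathrm{vis}}=\{s\}$ and every hypothesis of the lemma holds. Still, $\Delta w_a=+\tfrac{\eta}{9}\phi(s)$, which contradicts \eqref{eq:param_update_neg} whenever $\phi(s)\neq 0$.

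The second claim therefore needs a hypothesis that controls the average. One option is $Q^\pi(s,a)\le V^\pi(s)-\delta(s)$, which is exactly $A^\pi(s,a)\le-\delta(s)$. Another is $Q^\pi(s,a)\le\min_{a'\neq a}Q^\pi(s,a')-\delta(s)$; under it the substitution above runs in the correct direction and yields $A^\pi(s,a)\le-\delta(s)\bigl(1-\pi(a\mid s)\bigr)$. With either replacement, the rest of your proof, including your positivity remarks, goes through unchanged.
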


\begin{proof}
The gradient of $J$ with respect to $w_a$ is
\begin{equation}
    \nabla_{w_a} J = \mathbb{E}_{s \sim d_\pi,\, a' \sim \pi}\!\left[ A^\pi(s, a')\, \frac{\partial \log \pi(a' \mid s)}{\partial z_a(s)}\, \phi(s) \right].
\end{equation}
By Lemma~\ref{lem:logit_update}, the inner expectation over $a'$ at each $s$ yields $\pi(a \mid s)\, A^\pi(s, a)$, giving Equation~\eqref{eq:param_update}. When $a$ is invalid at all states in $\mathcal{S}_{\mathrm{vis}}$, Lemma~\ref{lem:dominance_gap} gives $A^\pi(s, a) < 0$, so $\pi(a \mid s)\, A^\pi(s, a) = -c(s)$ with $c(s) > 0$.
\end{proof}

\subsection{Generalization of Logit Suppression}

\begin{proposition}[Generalization of logit suppression]\label{prop:suppression}
Consider a policy with linear logits $z_a(s) = \phi(s)^\top w_a$. Suppose action $a$ is invalid at every state in $\mathcal{S}_{\mathrm{vis}}$ and condition~\eqref{eq:dominance} holds on $\mathcal{S}_{\mathrm{vis}}$.

Then for any $s' \in \mathcal{S}_{\mathrm{unvis}}$, the logit change after one gradient step is
\begin{equation}\label{eq:suppression}
    \Delta z_a(s') = -\eta\, \phi(s')^\top \mathbb{E}_{s \sim d_\pi}\!\left[\, c(s)\, \phi(s)\,\right]
\end{equation}
which is strictly negative whenever
\begin{equation}
    \phi(s')^\top \mathbb{E}_{s \sim d_\pi}\!\left[\, c(s)\, \phi(s)\,\right] > 0,
\end{equation}
that is, whenever $\phi(s')$ is not orthogonal to the advantage-weighted mean of visited state features.
\end{proposition}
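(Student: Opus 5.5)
The plan is to compute the logit change at $s'$ directly from the parameter update of Lemma~\ref{lem:param_update}, using linearity of the logit in $w_a$. Since $z_a(s') = \phi(s')^\top w_a$ and the features $\phi$ are fixed, a single gradient step changes only the weights, so $\Delta z_a(s') = \phi(s')^\top \Delta w_a$. This holds at every state, including unvisited ones. It is exactly the coupling mechanism described in the preliminaries: $s' \in \mathcal{S}_{\mathrm{unvis}}$ contributes no terms to the expectation over $d_\pi$, but it still inherits the change in the shared $w_a$.

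The key steps, in order, are as follows.

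\begin{enumerate}
\item Invoke Lemma~\ref{lem:param_update}. It gives $\Delta w_a = \eta\,\mathbb{E}_{s\sim d_\pi}[\pi(a\mid s)A^\pi(s,a)\phi(s)]$. Because $a$ is invalid on all of $\mathcal{S}_{\mathrm{vis}}$ and condition~\eqref{eq:dominance} holds there, the lemma rewrites this as $\Delta w_a = -\eta\,\mathbb{E}_{s\sim d_\pi}[c(s)\phi(s)]$.
\item Establish the sign of $c(s)$. This uses Lemma~\ref{lem:dominance_gap}, which gives $A^\pi(s,a)\le -\delta(s)(1-\pi(a\mid s))<0$ whenever $\pi(a\mid s)<1$. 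That condition holds for softmax with finite logits, since every action has positive probability and $n\ge 2$. Together with $\pi(a\mid s)>0$, this yields $c(s)=\pi(a\mid s)|A^\pi(s,a)|>0$ on the support of $d_\pi$.
\item Take the inner product with $\phi(s')$ and pull the constant vector $\phi(s')$ inside the expectation by linearity. This gives equation~\eqref{eq:suppression}.
\item Read off strict negativity. Since $\eta>0$, $\Delta z_a(s')<0$ holds if and only if $\phi(s')^\top\mathbb{E}_{s\sim d_\pi}[c(s)\phi(s)]>0$. This is the stated non-orthogonality, or more precisely positive alignment, condition.
\end{enumerate}

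The main subtlety is not computational; it lies in the justification of step~2 and in the interpretation of ``one gradient step''. First, $\pi(a\mid s)<1$ must hold strictly so that the dominance gap transfers to a strictly negative advantage, which the softmax parameterization guarantees. Second, the expectation is restricted to $\mathcal{S}_{\mathrm{vis}}$ automatically, because $d_\pi$ vanishes off that set, so validity of $a$ at $s'$ plays no role in the update.

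I would also remark that the argument concerns the expected (population) gradient step, as in Lemma~\ref{lem:param_update}. It also uses ``not orthogonal'' in the signed sense, namely a positive inner product. A negative inner product would instead raise the logit, so the proposition should be read with the positivity condition as the operative hypothesis.
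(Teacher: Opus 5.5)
Your argument follows the paper's route step for step. Lemma~\ref{lem:param_update} gives $\Delta w_a = -\eta\,\mathbb{E}_{s\sim d_\pi}[c(s)\phi(s)]$, and linearity gives $\Delta z_a(s') = \phi(s')^\top \Delta w_a$. Two of your refinements are correct: the operative hypothesis is a \emph{positive} inner product, and only the expected step is at issue.

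There is, however, a genuine gap in step~2, and the paper's proof shares it because it rests on the same lemma. Condition~\eqref{eq:dominance} does not imply $A^\pi(s,a)<0$, so Lemma~\ref{lem:dominance_gap} fails as stated. The condition compares $Q^\pi(s,a)$ only with the \emph{best} valid action. By contrast, $A^\pi(s,a) = Q^\pi(s,a) - V^\pi(s)$ compares it with a $\pi$-weighted average, which can sit well below that maximum.

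In the lemma's proof the inequality runs the wrong way. From $Q^\pi(s,b^*)\ge Q^\pi(s,a')$ one gets $\sum_{a'\neq a}\pi(a'\mid s)\,Q^\pi(s,a') \le (1-\pi(a\mid s))\,Q^\pi(s,b^*)$. That is an \emph{upper} bound on $V^\pi(s)$, and hence only a \emph{lower} bound on $A^\pi(s,a)$.

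Here is a concrete counterexample. Take a single visited state with actions $a$ (invalid) and $b,d$ (valid), set $Q^\pi(s,\cdot) = (6,10,0)$, and use the uniform policy (e.g.\ $w_j = 0$ for all $j$).
\begin{itemize}
\item Condition~\eqref{eq:dominance} holds with $\delta(s)=4$.
\item Yet $A^\pi(s,a) = 6 - 16/3 = 2/3 > 0$.
\item So $\Delta w_a = \tfrac{2}{9}\eta\,\phi(s)$ and $\Delta z_a(s') = \tfrac{2}{9}\eta\,\phi(s')^\top\phi(s)$. This is \emph{positive} exactly when the alignment condition holds, whereas \eqref{eq:suppression} with $c(s)=\pi(a\mid s)|A^\pi(s,a)|$ predicts its negative.
\end{itemize}
The subtlety you flagged, $\pi(a\mid s)<1$, is real but secondary.

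To close the gap you need a hypothesis that controls $a$ against the policy average rather than against the maximum. Either of the following works:
\begin{itemize}
\item Assume $A^{\pi}(s,a)<0$ on $\mathcal{S}_{\mathrm{vis}}$ directly; this is all that Lemma~\ref{lem:param_update} and the proposition actually use.
\item Assume $a$ is weakly worst at visited states: $Q^\pi(s,a)\le Q^\pi(s,a')$ for all $a'$, with strict inequality for some $a'$. Since the softmax has full support, $A^\pi(s,a) = \sum_{a'\neq a}\pi(a'\mid s)\bigl(Q^\pi(s,a)-Q^\pi(s,a')\bigr) < 0$.
\end{itemize}
With either replacement, steps 1, 3 and 4 of your proposal go through unchanged.
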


\begin{proof}
From Lemma~\ref{lem:param_update}, $\Delta w_a = -\eta\, \mathbb{E}_{s \sim d_\pi}[c(s)\, \phi(s)]$. The logit change at $s'$ is
\begin{equation}
    \Delta z_a(s') = \phi(s')^\top \Delta w_a = -\eta\, \phi(s')^\top \mathbb{E}_{s \sim d_\pi}\!\left[c(s)\, \phi(s)\right].
\end{equation}
Since $c(s) > 0$ for all $s \in \mathcal{S}_{\mathrm{vis}}$, the vector $\mathbb{E}_{s \sim d_\pi}[c(s)\, \phi(s)]$ is a positive combination of visited feature vectors, and the result follows.
\end{proof}

\subsection{Proof of Theorem~\ref{thm:prob_suppression}}

\begin{proof}
We first derive the per-step suppression rate $\kappa_\tau$ and then bound the probability.

\paragraph{Per-step suppression.}
By Proposition~\ref{prop:suppression}, the logit change of action $a$ at the unvisited state $s^*$ after one gradient step at time $\tau$ is
\begin{equation}
    \Delta z_a^{(\tau)}(s^*) = -\eta\, \phi(s^*)^\top \mathbb{E}_{s \sim d_{\pi_\tau}}\!\left[\, c_\tau(s)\, \phi(s)\,\right]
\end{equation}
where $c_\tau(s) = \pi_\tau(a \mid s)\, |A^{\pi_\tau}(s, a)| > 0$ for all $s \in \mathcal{S}_{\mathrm{vis}}$. Since $a$ is invalid at every visited state, Lemma~\ref{lem:dominance_gap} gives $A^{\pi_\tau}(s, a) < 0$. The per-step suppression rate defined in the theorem,
\begin{equation}
    \kappa_\tau = \eta\, \phi(s^*)^\top \mathbb{E}_{s \sim d_{\pi_\tau}}\!\left[\, c_\tau(s)\, \phi(s)\,\right],
\end{equation}
equals the magnitude of this logit decrease: $\Delta z_a^{(\tau)}(s^*) = -\kappa_\tau$. Condition~ii ensures $\kappa_\tau > 0$ at every step.

\paragraph{Partition function bound.}
Write $\sigma_j^{(T)} = \sum_{\tau=0}^{T-1} \Delta z_j^{(\tau)}(s^*)$ for the cumulative logit change of action $j$ at $s^*$. By Lemma~\ref{lem:zero_sum} on the zero-sum identity, $\sum_j \sigma_j^{(T)} = 0$. Jensen's inequality (convexity of $\exp$, uniform distribution over $n$ actions) then gives
\begin{equation}
    \sum_j e^{\sigma_j^{(T)}} \geq n\, e^{\frac{1}{n}\sum_j \sigma_j^{(T)}} = n.
\end{equation}

\paragraph{Probability bound.}
Since the common initial logit $c$ cancels in $\pi_T(a \mid s^*) = e^{\sigma_a^{(T)}} / \sum_j e^{\sigma_j^{(T)}}$:
\begin{equation}
    \pi_T(a \mid s^*) \leq \frac{e^{\sigma_a^{(T)}}}{n}.
\end{equation}
From the per-step suppression, $\sigma_a^{(T)} = \sum_{\tau=0}^{T-1} \Delta z_a^{(\tau)}(s^*) = -\sum_{\tau=0}^{T-1} \kappa_\tau = -K_T$, giving $\pi_T(a \mid s^*) \leq e^{-K_T}/n$.
\end{proof}

\subsection{Entropy Regularization}
\label{app:entropy_proof}

\begin{proof}[Proof that the zero-sum identity and suppression bound hold under entropy regularization]
With entropy coefficient $\beta > 0$, the per-step objective becomes $J_\beta = J + \beta\, \mathbb{E}_{s \sim d_\pi}[H(\pi(\cdot \mid s))]$.

\paragraph{Zero-sum identity under entropy regularization.}
By Lemma~\ref{lem:logit_update}, the entropy-regularized expected logit update for action $a$ at state $s$ is
\begin{equation}\label{eq:entropy_logit_update}
    \Delta z_a^{\mathrm{ent}}(s) \;=\; \pi(a \mid s)\,\bigl[\,A^\pi(s, a) \;-\; \beta \log \pi(a \mid s) \;-\; \beta\, H(\pi(\cdot \mid s))\,\bigr].
\end{equation}
Summing over all actions $a = 1, \ldots, n$:
\begin{align}
    \sum_{a=1}^n \Delta z_a^{\mathrm{ent}}(s)
    &= \sum_a \pi(a \mid s)\, A^\pi(s, a) \;-\; \beta \sum_a \pi(a \mid s) \log \pi(a \mid s) \;-\; \beta\, H(\pi(\cdot \mid s)) \sum_a \pi(a \mid s) \notag \\
    &= 0 \;+\; \beta\, H(\pi(\cdot \mid s)) \;-\; \beta\, H(\pi(\cdot \mid s)) \;=\; 0,
\end{align}
where we used $\sum_a \pi(a \mid s)\, A^\pi(s, a) = 0$ and $H(\pi(\cdot \mid s)) = -\sum_a \pi(a \mid s) \log \pi(a \mid s)$. The zero-sum identity therefore holds at every state and every step, exactly as in the unregularized case. Under the linear parameterization $z_j(s) = \phi(s)^\top w_j$, this implies $\sum_j \Delta z_j^{(\tau)}(s') = 0$ for all $s'$, so Jensen's inequality continues to give $\sum_j e^{\sigma_j^{(T)}} \geq n$ and Equation~\eqref{eq:T_step_decay} holds.

\paragraph{Vanishing correction at uniform initialization.}
At initialization, $\pi_0(a \mid s) = 1/n$ for all $a, s$. The entropy correction terms in Equation~\eqref{eq:entropy_logit_update} evaluate to
\begin{equation}
    -\beta \log \pi_0(a \mid s) - \beta\, H(\pi_0(\cdot \mid s)) = -\beta \log(1/n) - \beta \log n = \beta \log n - \beta \log n = 0.
\end{equation}
Therefore $\Delta z_a^{\mathrm{ent}}(s) = \pi_0(a \mid s)\, A^\pi(s, a)$ at the first step, identical to the unregularized update.

\paragraph{Convergence and sandwich bound.}
Recall from the proof of Theorem~\ref{thm:prob_suppression} that the unregularized per-step suppression rate is $\kappa_\tau = \eta\, \phi(s^*)^\top \mathbb{E}_{s \sim d_{\pi_\tau}}[c_\tau(s)\, \phi(s)]$, where $c_\tau(s) = \pi_\tau(a \mid s)\, |A^{\pi_\tau}(s, a)|$. Under entropy regularization, the logit update~\eqref{eq:entropy_logit_update} adds correction terms that modify the suppression rate. Using the same propagation via parameter sharing as in Proposition~\ref{prop:suppression}, the entropy-regularized suppression at step $\tau$ becomes
\begin{equation}
    \kappa_\tau^{\mathrm{ent}} = \kappa_\tau - \beta\, \phi(s^*)^\top \mathbb{E}_{s \sim d_{\pi_\tau}}\!\left[\, \pi_\tau(a \mid s)\, \bigl(\log \pi_\tau(a \mid s) + H(\pi_\tau(\cdot \mid s))\bigr)\, \phi(s) \,\right].
\end{equation}

As $\pi_\tau(a \mid s) \to 0$ at visited states where $a$ is invalid, the correction term $\pi_\tau(a \mid s) \log \pi_\tau(a \mid s) \to 0$, so the entropy correction at visited states vanishes. The key effect operates at $s^*$ itself: as $\pi_\tau(a \mid s^*)$ decreases, the entropy bonus $-\beta \log \pi_\tau(a \mid s^*)$ grows without bound, while the advantage $|A^\pi(s, a)| \leq r_{\max}/(1 - \gamma)$ remains bounded for all $s$ and $\pi$.

\emph{Lower bound.} The entropy-regularized logit update for action $a$ at any state includes the term $-\beta \log \pi(a \mid s)$, which exceeds the maximum advantage magnitude $r_{\max}/(1-\gamma)$ whenever $\pi(a \mid s) < \exp(-r_{\max}/(\beta(1-\gamma)))$. Below this threshold the entropy bonus dominates, preventing further suppression. Therefore
\begin{equation}
    \pi_\infty(a \mid s^*) \;\geq\; \exp\!\left(-\frac{r_{\max}}{\beta(1 - \gamma)}\right).
\end{equation}

\emph{Upper bound and convergence.} Since suppression halts at a finite logit displacement, the cumulative suppression converges: $K_\infty = \sum_{\tau=0}^\infty \kappa_\tau^{\mathrm{ent}} < \infty$. Applying the same Jensen bound as in the proof of Equation~\eqref{eq:T_step_decay}:
\begin{equation}
    \pi_\infty(a \mid s^*) \;\leq\; \frac{e^{-K_\infty}}{n} \;>\; 0.
\end{equation}
Combining yields the sandwich~\eqref{eq:entropy_sandwich}.\qedhere
\end{proof}

%% file: appendix_sections/appendix_environments.tex
\section{Environment Details}
\label{app:environments}

\begin{figure}[t]
  \centering
  \begin{subfigure}[t]{0.48\textwidth}
    \centering
    \includegraphics[width=\textwidth]{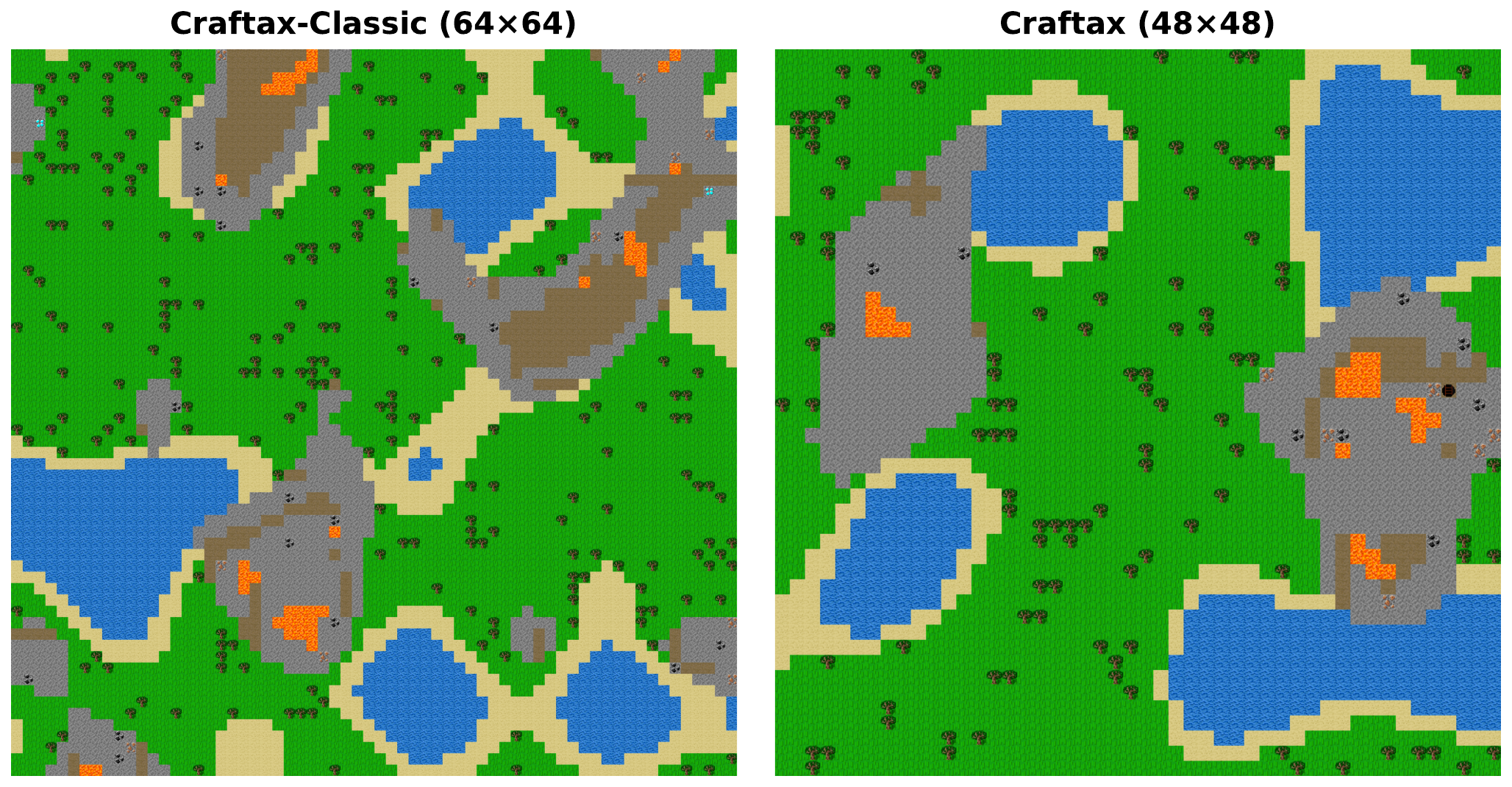}
    \caption{Craftax Classic (left) and Craftax (right).}
    \label{fig:env_craftax}
  \end{subfigure}
  \hfill
  \begin{subfigure}[t]{0.48\textwidth}
    \centering
    \includegraphics[width=\textwidth]{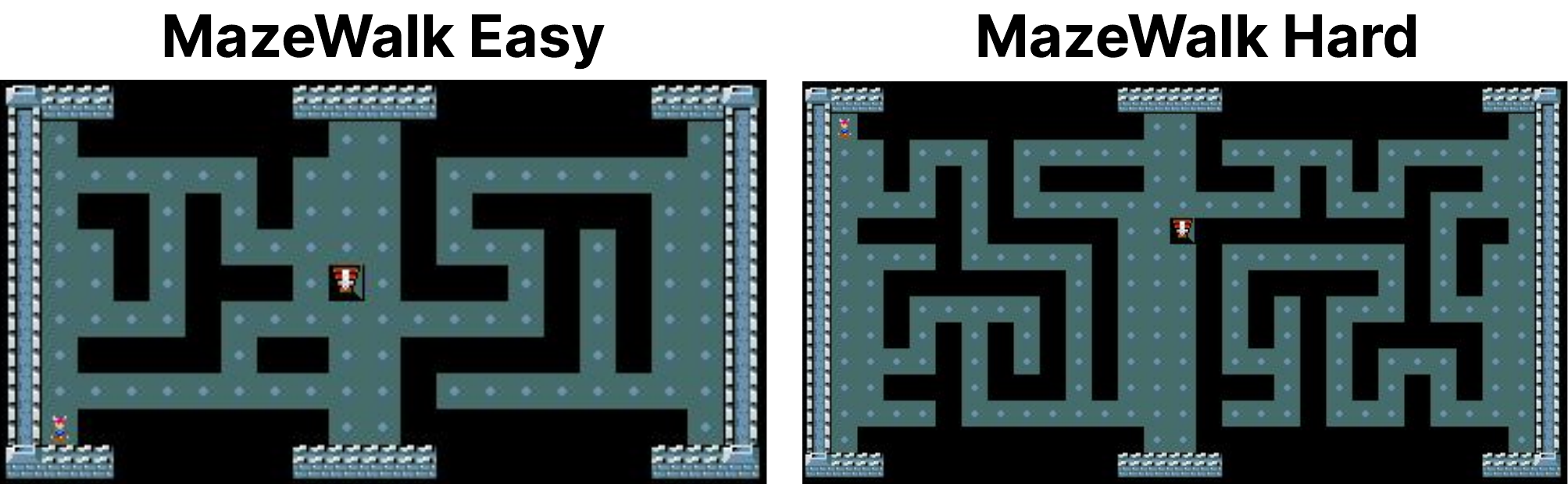}
    \caption{MazeWalk Easy (left) and MazeWalk Hard (right).}
    \label{fig:env_maze}
  \end{subfigure}

  \vspace{0.5em}

  \begin{subfigure}[t]{\textwidth}
    \centering
    \includegraphics[width=\textwidth]{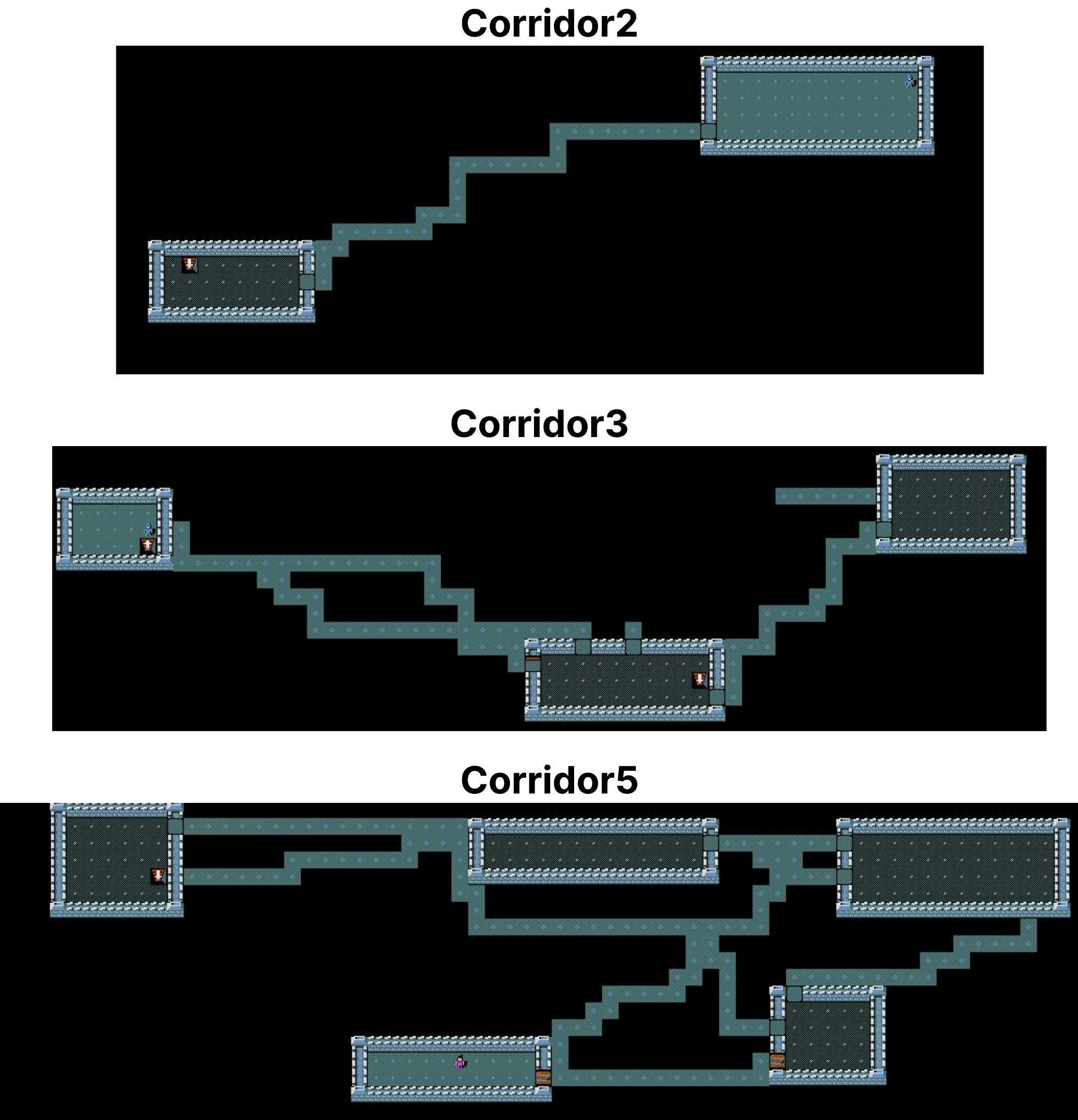}
    \caption{MiniHack Corridors with 2, 3, and 5 rooms.}
    \label{fig:env_corridor}
  \end{subfigure}
  \caption{Environments used in our experiments.}
  \label{fig:environments}
\end{figure}

\subsection{Craftax}

Craftax~\citep{craftax} is a JAX-based survival environment. We evaluate on two variants: Craftax-Classic (17 actions, single floor) and Craftax-Full (43 actions, 9 floors).

\paragraph{Observation space.}
Observations are symbolic vectors constructed by the environment renderer:

\textbf{Craftax-Classic: 1345 dimensions.} The 7$\times$9 local view (21 channels per tile) is flattened in C-order (row-major) to 1323 dimensions. Channels include: 17-channel one-hot block type, 4 binary mob channels. Inventory (12 dims): wood, stone, coal, iron, diamond, sapling, and tool counts, each divided by 10. Intrinsics (4 dims): health, food, drink, energy, each divided by 10. Direction (4 dims): one-hot encoding. Special (2 dims): light level and is\_sleeping flag.

\textbf{Craftax-Full: 8268 dimensions.} The 9$\times$11 local view (83 channels per tile) is flattened to 8217 dimensions. Channels include: 37-channel one-hot block type, 5-channel one-hot item type, 40 mob channels, and 1 light map channel. Darkness masking zeros all channels where light $\leq$ 0.05. Inventory (16 dims), potions (6 dims), intrinsics (9 dims), direction (4 dims), armour (8 dims), and special flags (8 dims) follow the normalization scheme in Table~\ref{tab:craftax_obs}.

\begin{table}[h]
\centering
\caption{Craftax-Full observation components.}
\label{tab:craftax_obs}
\small
\begin{tabular}{lll}
\toprule
\textbf{Component} & \textbf{Dims} & \textbf{Normalization} \\
\midrule
Map & 8217 & Flattened 9$\times$11$\times$83, darkness masked \\
Inventory & 16 & $\sqrt{\text{count}}/10$ for materials; level/4 for tools \\
Potions & 6 & $\sqrt{\text{count}}/10$ \\
Intrinsics & 9 & value/10 \\
Direction & 4 & One-hot \\
Armour & 8 & level/2 for pieces; raw int for enchantments \\
Special & 8 & Raw values or scaled \\
\bottomrule
\end{tabular}
\end{table}

\paragraph{Action space.}
Craftax-Classic has 17 actions: movement (4), interaction (2), placement (5), and crafting up to iron-tier (6). Craftax-Full has 43 actions including additional crafting (diamond tools, armour), combat (shoot arrow, cast spells), consumables (6 potions), progression (descend, ascend, rest), and character development (read book, enchant, level up attributes).

\paragraph{Validity function.}
The ground-truth validity function is self-implemented in \texttt{praxis/action\_masking/craftax/masks.py}. It extracts symbolic state from the environment (inventory counts, nearby blocks, player status) and applies hand-coded precondition rules that mirror the game logic in \texttt{craftax/game\_logic.py}. For example, \textsf{descend} requires standing on a down-ladder and having killed $\geq$ 8 monsters on the current floor; crafting requires nearby crafting tables/furnaces and sufficient materials.

Invalid actions result in silent no-ops with zero reward (no penalty). The reward function computes achievement deltas; invalid actions produce no achievements and no health change, yielding reward = 0.

\paragraph{Procedural generation.}
Generation is deterministic given a JAX PRNG key. Each floor is 48$\times$48, generated via fractal Perlin noise (1 octave, persistence 0.5, lacunarity 2). Terrain thresholds vary by level. Dungeons use 8 rooms (5--10 tiles each dimension) connected by L-shaped corridors. Ore spawn probabilities per stone block:

\begin{table}[h]
\centering
\caption{Ore spawn probabilities by floor.}
\label{tab:ore_spawns}
\small
\begin{tabular}{lccccc}
\toprule
\textbf{Ore} & \textbf{Overworld} & \textbf{Gnomish} & \textbf{Troll} & \textbf{Fire} & \textbf{Ice} \\
\midrule
Coal & 0.03 & 0.04 & 0.04 & 0.05 & 0.0 \\
Iron & 0.02 & 0.02 & 0.03 & 0.0 & 0.0 \\
Diamond & 0.001 & 0.005 & 0.01 & 0.0 & 0.005 \\
Sapphire & 0.0 & 0.0025 & 0.01 & 0.0 & 0.02 \\
Ruby & 0.0 & 0.0025 & 0.01 & 0.025 & 0.0 \\
\bottomrule
\end{tabular}
\end{table}

Mobs spawn probabilistically each timestep (base chances: 0.10 passive, 0.02--0.06 melee, 0.05 ranged). Maximum 3 melee, 3 passive, 2 ranged per level. The down-ladder is blocked until 8 monsters are killed on that floor.

\paragraph{Reward function.}
Rewards are immediate (per-step), not at episode end. Craftax-Classic has 22 achievements worth 1.0 each (max 22). Craftax-Full has 67 achievements with tiered rewards (1/3/5/8) for a maximum of 226. Health changes contribute $\pm$0.1 per HP.

\paragraph{Episode termination.}
Craftax-Classic: 10,000 steps, health $\leq$ 0, or lava contact. Craftax-Full: 100,000 steps, health $\leq$ 0, or boss defeated (8 hits on necromancer).

\subsection{MiniHack Corridors}

The MiniHack Corridor environments~\citep{minihack2021theplanet} are procedurally generated navigation tasks. We use a JAX reimplementation (Nethax) that faithfully reproduces NetHack 3.7 mechanics.

\paragraph{Observation space.}
Observations use SymbolicGlyphNet, adapted from the Chaotic Dwarf NLE baseline. The encoder processes a 9$\times$9 crop of a compact 50-ID glyph space (tile types, monsters, items) through embedding (32 dims) and two conv layers (64 and 128 channels, 3$\times$3 kernels, stride 2), producing 128-dim features. These are concatenated with message encoding (128 dims), BLStats encoding (155 dims), and previous action (16 dims), then projected to 256 dimensions.

\paragraph{Action space.}
11 discrete actions: 8 movement directions, \textsf{open\_door}, \textsf{kick}, and \textsf{search\_wait}. Typically 3--5 actions are valid depending on surrounding terrain.

\paragraph{Validity function.}
Implemented in \texttt{praxis/action\_masking/minihax\_corridor/masks.py}. \textsf{open\_door} requires adjacency to a closed (unlocked) door. Movement is invalid when blocked by walls or closed doors. \textsf{kick} is always valid but only necessary for locked doors.

Invalid actions that do not consume a game turn incur a configurable \texttt{frozen\_penalty} (default $-$0.01). Setting this to 0 yields pure no-op behavior.

\paragraph{Reward and termination.}
+1.0 for reaching the down staircase, 0 otherwise. Episodes terminate on reaching the stair, death, or timeout (1000 steps).

\subsection{Summary}

Table~\ref{tab:env_summary} summarizes key environment properties.

\begin{table}[h]
\centering
\caption{Environment summary.}
\label{tab:env_summary}
\small
\begin{tabular}{lccccc}
\toprule
\textbf{Environment} & $|\mathcal{A}|$ & \textbf{Valid/state} & \textbf{Validity ratio} & \textbf{Observation} & \textbf{Steps} \\
\midrule
Craftax-Full & 43 & 5--8 & 12--19\% & Symbolic (8268-d) & $10^9$ \\
Craftax-Classic & 17 & 5--8 & 29--47\% & Symbolic (1345-d) & $10^9$ \\
Corridor-2/3/5 & 11 & 3--5 & 27--45\% & GlyphNet (256-d) & $2 \times 10^8$ \\
\bottomrule
\end{tabular}
\end{table}

%% file: appendix_sections/appendix_validity.tex
\section{Validity Functions}
\label{app:validity}

We implement ground-truth action validity functions for each environment by extracting symbolic state features from the simulator and evaluating preconditions that mirror the environment's internal game logic. These functions are used both for oracle masking during training and for supervising the feasibility classification head. All implementations use JAX for vectorized evaluation across parallel environments.

\subsection{Craftax (43 Actions)}

Craftax exposes a rich symbolic state including inventory counts, nearby block types, player intrinsics (health, mana, energy), floor progression, and cardinal block adjacency. Each action's validity depends on specific preconditions:

\begin{itemize}[nosep]
    \item \textbf{Movement} (4 actions): blocked by solid blocks, water, or lava in the target direction.
    \item \textbf{Crafting} (13 actions): requires specific inventory items and proximity to crafting stations (table, furnace). Higher-tier items require multiple stations. Crafting is blocked if the player already has that tier or higher.
    \item \textbf{Floor transitions} (2 actions): \textsf{descend} requires standing on a down ladder and having killed $\geq 8$ monsters on the current floor; \textsf{ascend} requires standing on an up ladder and floor $> 0$.
    \item \textbf{Spells} (2 actions): require the spell to be learned and sufficient mana ($\geq 2$).
    \item \textbf{Consumables} (7 actions): potions require $\geq 1$ of that potion type; books require $\geq 1$ book.
    \item \textbf{Enchanting} (3 actions): requires an enchantment table, sufficient mana ($\geq 9$), a gem (ruby or sapphire), and the item to enchant.
    \item \textbf{Level-up} (3 actions): requires $\geq 1$ XP and the attribute below maximum (5).
    \item \textbf{Placement} (5 actions): requires the item in inventory; torches additionally require a dark floor.
    \item \textbf{Sleep}: only valid when energy $\leq 5$.
    \item \textbf{Sleep/rest lock}: while sleeping or resting, only \textsf{noop} is valid.
\end{itemize}

Listing~\ref{lst:craftax_mask} shows the implementation. A separate symbolic state extractor reads the raw environment state into a structured NamedTuple containing all fields needed for mask evaluation.

\begin{figure}[h]
\begin{lstlisting}[style=python, caption={Craftax validity function (43 actions). Each rule mirrors preconditions from \texttt{craftax/game\_logic.py}. The function operates on batched symbolic states and returns a boolean mask.}, label=lst:craftax_mask]
def craftax_action_mask(sym, num_actions=43):
    num_envs = sym.wood.shape[0]
    mask = jnp.ones((num_envs, num_actions), dtype=jnp.bool_)

    # Sleep only when low energy
    mask = mask.at[:, SLEEP].set(sym.player_energy <= 5)

    ct = sym.has_crafting_table
    fn = sym.has_furnace
    ct_fn = ct & fn

    # Placement actions
    mask = mask.at[:, PLACE_STONE].set(sym.stone >= 1)
    mask = mask.at[:, PLACE_TABLE].set(sym.wood >= 2)
    mask = mask.at[:, PLACE_FURNACE].set(sym.stone >= 1)
    mask = mask.at[:, PLACE_PLANT].set(sym.sapling >= 1)
    dark_floor = (sym.player_level == 2) | (sym.player_level == 5)
                 | (sym.player_level == 7) | (sym.player_level == 8)
    mask = mask.at[:, PLACE_TORCH].set((sym.torches >= 1) & dark_floor)

    # Pickaxes (require crafting table; iron/diamond need furnace)
    mask = mask.at[:, MAKE_WOOD_PICKAXE].set(
        ct & (sym.wood >= 1) & (sym.pickaxe < 1))
    mask = mask.at[:, MAKE_STONE_PICKAXE].set(
        ct & (sym.wood >= 1) & (sym.stone >= 1) & (sym.pickaxe < 2))
    mask = mask.at[:, MAKE_IRON_PICKAXE].set(
        ct_fn & (sym.wood >= 1) & (sym.stone >= 1) & (sym.iron >= 1)
        & (sym.coal >= 1) & (sym.pickaxe < 3))
    mask = mask.at[:, MAKE_DIAMOND_PICKAXE].set(
        ct & (sym.wood >= 1) & (sym.diamond >= 3) & (sym.pickaxe < 4))

    # Swords
    mask = mask.at[:, MAKE_WOOD_SWORD].set(
        ct & (sym.wood >= 1) & (sym.sword < 1))
    mask = mask.at[:, MAKE_STONE_SWORD].set(
        ct & (sym.wood >= 1) & (sym.stone >= 1) & (sym.sword < 2))
    mask = mask.at[:, MAKE_IRON_SWORD].set(
        ct_fn & (sym.wood >= 1) & (sym.stone >= 1) & (sym.iron >= 1)
        & (sym.coal >= 1) & (sym.sword < 3))
    mask = mask.at[:, MAKE_DIAMOND_SWORD].set(
        ct & (sym.wood >= 1) & (sym.diamond >= 2) & (sym.sword < 4))

    # Armour
    mask = mask.at[:, MAKE_IRON_ARMOUR].set(
        ct_fn & (sym.iron >= 3) & (sym.coal >= 3)
        & ((sym.armour < 1).sum(axis=-1) > 0))
    mask = mask.at[:, MAKE_DIAMOND_ARMOUR].set(
        ct & (sym.diamond >= 3) & ((sym.armour < 2).sum(axis=-1) > 0))

    # Ranged / consumables
    mask = mask.at[:, SHOOT_ARROW].set((sym.arrows >= 1) & (sym.bow >= 1))
    mask = mask.at[:, MAKE_ARROW].set(
        ct & (sym.wood >= 1) & (sym.stone >= 1) & (sym.arrows < 99))
    mask = mask.at[:, MAKE_TORCH].set(
        ct & (sym.wood >= 1) & (sym.coal >= 1) & (sym.torches < 99))

    # Spells
    mask = mask.at[:, CAST_FIREBALL].set(
        sym.learned_spells[:, 0] & (sym.player_mana >= 2))
    mask = mask.at[:, CAST_ICEBALL].set(
        sym.learned_spells[:, 1] & (sym.player_mana >= 2))

    # Potions
    for i in range(6):
        mask = mask.at[:, DRINK_POTION_RED + i].set(sym.potions[:, i] >= 1)

    # Books, enchanting
    mask = mask.at[:, READ_BOOK].set(sym.books >= 1)
    has_gem = (sym.ruby >= 1) | (sym.sapphire >= 1)
    can_enchant = (sym.has_enchant_fire | sym.has_enchant_ice) \
                  & (sym.player_mana >= 9) & has_gem
    mask = mask.at[:, ENCHANT_SWORD].set(can_enchant & (sym.sword >= 1))
    mask = mask.at[:, ENCHANT_ARMOUR].set(
        can_enchant & (sym.armour.sum(axis=-1) > 0))
    mask = mask.at[:, ENCHANT_BOW].set(can_enchant & (sym.bow >= 1))

    # Floor transitions
    current_kills = sym.monsters_killed[jnp.arange(num_envs), sym.player_level]
    mask = mask.at[:, DESCEND].set(
        sym.on_down_ladder & (current_kills >= 8))
    mask = mask.at[:, ASCEND].set(
        sym.on_up_ladder & (sym.player_level > 0))

    # Level up (xp >= 1, attribute < max)
    can_level = sym.player_xp >= 1
    mask = mask.at[:, LEVEL_UP_DEXTERITY].set(
        can_level & (sym.player_dexterity < 5))
    mask = mask.at[:, LEVEL_UP_STRENGTH].set(
        can_level & (sym.player_strength < 5))
    mask = mask.at[:, LEVEL_UP_INTELLIGENCE].set(
        can_level & (sym.player_intelligence < 5))

    # Movement: mask walking into solid blocks, water, or lava
    mask = mask.at[:, LEFT].set(~is_blocked(sym.block_left))
    mask = mask.at[:, RIGHT].set(~is_blocked(sym.block_right))
    mask = mask.at[:, UP].set(~is_blocked(sym.block_up))
    mask = mask.at[:, DOWN].set(~is_blocked(sym.block_down))

    # Sleep/rest lock: only NOOP while sleeping or resting
    locked = (sym.is_sleeping | sym.is_resting)[:, None]
    noop_only = jnp.zeros((num_envs, num_actions), dtype=jnp.bool_)
    noop_only = noop_only.at[:, NOOP].set(True)
    mask = jnp.where(locked, noop_only, mask)
    return mask
\end{lstlisting}
\end{figure}

\subsection{Craftax-Classic (17 Actions)}

Craftax-Classic uses a reduced action space without floor transitions, ranged combat, spells, enchanting, or potions. The validity function follows the same structure but is simpler: crafting requires a crafting table (and furnace for iron-tier items), placement requires the item in inventory, and movement is blocked only by lava tiles. The \textsf{do} action (interact with facing block) is valid only when the facing tile is interactable (tree, ore, water, ripe plant, grass, or mob). Sleep locks all actions to \textsf{noop}.

\begin{figure}[h]
\begin{lstlisting}[style=python, caption={Craftax-Classic validity function (17 actions).}, label=lst:classic_mask]
def craftax_classic_action_mask(sym):
    num_envs = sym.wood.shape[0]
    mask = jnp.ones((num_envs, 17), dtype=jnp.bool_)

    ct = sym.has_crafting_table
    fn = sym.has_furnace
    ct_fn = ct & fn

    # Placement
    mask = mask.at[:, PLACE_STONE].set(sym.stone >= 1)
    mask = mask.at[:, PLACE_TABLE].set(sym.wood >= 2)
    mask = mask.at[:, PLACE_FURNACE].set(sym.stone >= 1)
    mask = mask.at[:, PLACE_PLANT].set(sym.sapling >= 1)

    # Pickaxes
    mask = mask.at[:, MAKE_WOOD_PICKAXE].set(
        ct & (sym.wood >= 1) & ~sym.wood_pickaxe)
    mask = mask.at[:, MAKE_STONE_PICKAXE].set(
        ct & (sym.wood >= 1) & (sym.stone >= 1) & ~sym.stone_pickaxe)
    mask = mask.at[:, MAKE_IRON_PICKAXE].set(
        ct_fn & (sym.wood >= 1) & (sym.stone >= 1)
        & (sym.iron >= 1) & (sym.coal >= 1) & ~sym.iron_pickaxe)

    # Swords
    mask = mask.at[:, MAKE_WOOD_SWORD].set(
        ct & (sym.wood >= 1) & ~sym.wood_sword)
    mask = mask.at[:, MAKE_STONE_SWORD].set(
        ct & (sym.wood >= 1) & (sym.stone >= 1) & ~sym.stone_sword)
    mask = mask.at[:, MAKE_IRON_SWORD].set(
        ct_fn & (sym.wood >= 1) & (sym.stone >= 1)
        & (sym.iron >= 1) & (sym.coal >= 1) & ~sym.iron_sword)

    # Movement: block walking into lava
    mask = mask.at[:, LEFT].set(sym.block_left != LAVA)
    mask = mask.at[:, RIGHT].set(sym.block_right != LAVA)
    mask = mask.at[:, UP].set(sym.block_up != LAVA)
    mask = mask.at[:, DOWN].set(sym.block_down != LAVA)

    # Do: only when facing something interactable
    fb = sym.facing_block
    can_do = (sym.facing_has_mob | (fb == TREE) | (fb == WATER)
              | ((fb == STONE) & sym.wood_pickaxe)
              | ((fb == COAL) & sym.wood_pickaxe)
              | ((fb == IRON) & sym.stone_pickaxe)
              | ((fb == DIAMOND) & sym.iron_pickaxe)
              | (fb == RIPE_PLANT) | (fb == GRASS))
    mask = mask.at[:, DO].set(can_do)

    # Sleep lock
    locked = sym.is_sleeping[:, None]
    noop_only = jnp.zeros((num_envs, 17), dtype=jnp.bool_)
    noop_only = noop_only.at[:, NOOP].set(True)
    mask = jnp.where(locked, noop_only, mask)
    return mask
\end{lstlisting}
\end{figure}

\subsection{MiniHack Corridor-5 (11 Actions)}

MiniHack Corridor-5 uses a tile-based grid with 8-directional movement, doors, and search/wait. The symbolic state consists of the 8 adjacent tile types. Movement is valid only if the target tile is walkable (floor, stairs, corridor, or open door). \textsf{open\_door} is valid only when adjacent to a closed door. \textsf{kick} is valid only when adjacent to a locked door. \textsf{search\_wait} is always valid.

\begin{figure}[h]
\begin{lstlisting}[style=python, caption={MiniHack Corridor-5 validity function (11 actions).}, label=lst:corridor_mask]
WALKABLE = [FLOOR, UPSTAIR, DOWNSTAIR, CORRIDOR, DOOR_OPEN]

def corridor_action_mask(sym, num_actions=11):
    num_envs = sym.adjacent_tiles.shape[0]
    mask = jnp.ones((num_envs, num_actions), dtype=jnp.bool_)

    # Movement (0-7): allow if target tile is walkable
    walkable = jnp.isin(sym.adjacent_tiles, jnp.array(WALKABLE))
    mask = mask.at[:, :8].set(walkable)

    # OPEN_DOOR (8): adjacent to a closed door
    has_closed_door = jnp.any(
        sym.adjacent_tiles == DOOR_CLOSED, axis=-1)
    mask = mask.at[:, OPEN_DOOR].set(has_closed_door)

    # KICK (9): adjacent to a locked door
    has_locked_door = jnp.any(
        sym.adjacent_tiles == DOOR_LOCKED, axis=-1)
    mask = mask.at[:, KICK].set(has_locked_door)

    # SEARCH_WAIT (10) always allowed
    return mask
\end{lstlisting}
\end{figure}

%% file: appendix_sections/appendix_training.tex
\section{Training Details}
\label{app:training}

\subsection{Architectural Details}

\paragraph{PPO-Hybrid (Transformer-XL + S5).}
Observations are projected to 256 dimensions via a linear layer (Craftax) or GlyphNet encoder (MiniHack). The backbone uses one Transformer-XL layer followed by five S5 layers (pattern: TSSSSS), each with model dimension 256 and SSM state dimension 256. The Transformer layer uses 8 attention heads with QKV dimension 256, gated residuals, and memory window of 128 tokens. Actor and critic heads are 2-layer MLPs with 256 hidden units and ReLU activations.

\paragraph{PPO-RNN (GRU).}
Observations are embedded to 512 dimensions, then passed through a single-layer GRU with 512 hidden units. Actor and critic heads are 2-layer MLPs with 512 hidden units and ReLU activations.

\paragraph{PPO-MLP.}
Separate actor and critic trunks, each with three dense layers (512 hidden units, tanh activations). Used on Craftax only; symbolic observations provide sufficient information without memory.

\subsection{Hyperparameters}

Table~\ref{tab:hyperparams} reports PPO hyperparameters. Parameters that differ across architectures are marked with $^*$.

\begin{table}[h]
\centering
\caption{PPO hyperparameters. $^*$ indicates architecture-specific values.}
\label{tab:hyperparams}
\small
\begin{tabular}{lccc}
\toprule
\textbf{Hyperparameter} & \textbf{PPO-Hybrid} & \textbf{PPO-RNN} & \textbf{PPO-MLP} \\
\midrule
Parallel environments & 1024 & 1024 & 1024 \\
Steps per rollout & 128$^*$ & 64$^*$ & 64$^*$ \\
Batch size & 131{,}072 & 65{,}536 & 65{,}536 \\
Number of minibatches & 8 & 8 & 8 \\
Minibatch size & 16{,}384 & 8{,}192 & 8{,}192 \\
Update epochs & 4 & 4 & 4 \\
Learning rate & $2 \times 10^{-4}$ & $2 \times 10^{-4}$ & $2 \times 10^{-4}$ \\
LR annealing & Linear & Linear & Linear \\
Discount $\gamma$ & 0.999$^*$ & 0.99$^*$ & 0.99$^*$ \\
GAE $\lambda$ & 0.8 & 0.8 & 0.8 \\
Clip ratio $\epsilon$ & 0.2 & 0.2 & 0.2 \\
Entropy coefficient & 0.002$^*$ & 0.01$^*$ & 0.01$^*$ \\
Value function coefficient & 0.5 & 0.5 & 0.5 \\
Max gradient norm & 1.0 & 1.0 & 1.0 \\
Activation & ReLU & Tanh & Tanh \\
\midrule
\multicolumn{4}{l}{\textbf{Auxiliary losses (C3 and C4 only)}} \\
\midrule
Classification coefficient $\lambda$ & 10 & 10 & 10 \\
Focal parameter $\gamma_{\text{focal}}$ & 2.0 & 2.0 & 2.0 \\
\midrule
\multicolumn{4}{l}{\textbf{RND (MiniHack only)}} \\
\midrule
RND reward coefficient & 1.0 & 1.0 & 1.0 \\
RND loss coefficient & 0.01 & 0.01 & 0.01 \\
RND intrinsic GAE coefficient & 0.01 & 0.01 & 0.01 \\
RND network hidden / output & 256 / 512 & 256 / 512 & 256 / 512 \\
\bottomrule
\end{tabular}
\end{table}

Batch size equals parallel environments $\times$ steps per rollout. Minibatch size equals batch size $/$ number of minibatches.

\subsection{Training Infrastructure}

All experiments use 1024 parallel environments on a single accelerator. Craftax: $10^9$ steps. MiniHack: $2 \times 10^8$ steps. Results report mean and standard deviation over 4 seeds.

\subsection{Measurement of Suppression}

We instrument the environment to log $\pi(a \mid s^*)$ the first time each state-action pair $(s^*, a)$ becomes valid during training:

\begin{enumerate}
    \item \textbf{First-occurrence log-probability}: $\log \pi_T(a \mid s^*)$ at the step when $s^*$ first enters the training batch.
    \item \textbf{Time-to-valid}: steps between when $a$ first becomes valid and when $\pi(a \mid s^*) > 0.5$.
    \item \textbf{Suppression ratio}: first-occurrence probability divided by uniform initialization $1/|\mathcal{A}|$.
\end{enumerate}

Metrics are computed online during training by caching state-action validity transitions.

%% file: appendix_sections/appendix_ablations.tex
\section{Ablation Studies}
\label{app:ablations}




%% file: appendix_sections/appendix_additional_experiments.tex
\section{Additional Experiments}
\label{app:additional}

\subsection{Classification with Oracle Masking}
\label{app:mask_predictor_ablation}

The classification loss is designed for settings where oracle masks are unavailable, but it may also benefit masked agents by improving feature quality. We compare three conditions on Craftax with PPO-Hybrid. Condition (a), Unmasked + Predictor, uses KL-balanced classification with $\lambda = 20$ without oracle masking. Condition (b), Masked Only, uses oracle masking with no classification head. Condition (c), Masked + Predictor, combines oracle masking with KL-balanced classification. Figure~\ref{fig:mask_predictor_ablation} shows the results. All three conditions converge to similar episode return, though masked training learns faster early and the unmasked + predictor condition catches up by 400M frames. Both conditions with classification heads achieve 99\% validity prediction accuracy, so the encoder learns to distinguish valid and invalid states regardless of masking. When evaluated with oracle masks at test time, masked + predictor achieves higher return than masked only, indicating that explicitly training the encoder to separate valid and invalid states improves feature quality and policy performance even when masking handles the policy-level constraint. Evaluation with predicted masks is most practically relevant: masked only collapses to near-zero return because it lacks a classification head. Masked + predictor degrades to the performance of unmasked + predictor. Having oracle masks during training does not produce a better validity predictor than training with the classification loss alone.

Masking eliminates suppression at the policy level; classification improves representations at the feature level. Combining both yields the strongest performance under oracle evaluation, while the classification head alone provides robustness when oracle masks are unavailable.

\begin{figure}[h]
    \centering
    \includegraphics[width=\textwidth]{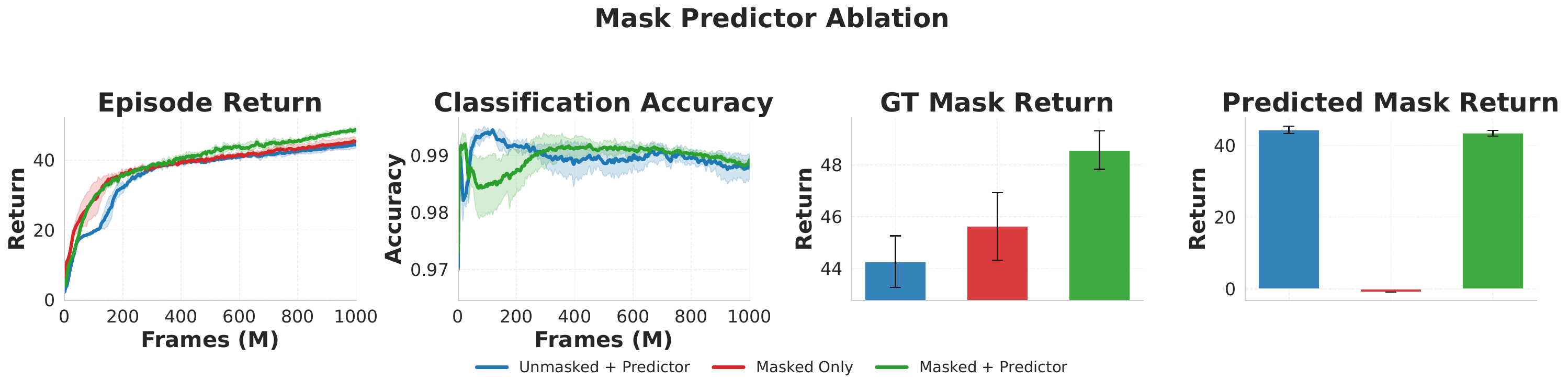}
    \caption{Mask predictor ablation on Craftax (PPO-Hybrid, $\lambda = 20$). Left: Episode return during training. Center-left: Classification accuracy. Center-right: Return with ground-truth masks. Right: Return with predicted masks.}
    \label{fig:mask_predictor_ablation}
\end{figure}

\subsection{Classification Coefficient Sweep (Masked + KL)}
\label{app:coeff_sweep_masked_kl}

We sweep the classification coefficient $\lambda \in \{0.1, 1, 5, 10, 20\}$ for the KL-balanced loss applied on top of oracle masking on Craftax (PPO-Hybrid). This identifies how much classification signal is needed to improve masked training. Figure~\ref{fig:coeff_sweep_masked_kl} shows the results.

Episode return is remarkably stable across the sweep. All coefficients converge to similar final return near 43--46, indicating that the KL-balanced classification loss does not interfere with policy optimization even at high coefficients. Higher classification coefficients ($\lambda \geq 5$) drive accuracy above 98\% and classification loss below 0.2, while $\lambda = 0.1$ achieves only approximately 97\% accuracy with higher residual loss. Sufficient classification signal is needed for the encoder to reliably separate valid and invalid states.

Under ground-truth mask evaluation, higher coefficients ($\lambda \in \{5, 10, 20\}$) produce small but statistically significant improvements over lower coefficients, consistent with Section~\ref{app:mask_predictor_ablation}: the classification loss improves feature quality even when oracle masks are available. Under predicted mask evaluation, the effect is more pronounced. Coefficients $\lambda \in \{5, 10, 20\}$ achieve predicted-mask return within 5\% of ground-truth-mask return, while $\lambda = 0.1$ and $\lambda = 1$ show a larger gap. The classification head must be sufficiently accurate for the predicted masks to serve as a viable substitute for oracle masks at deployment time.

We use $\lambda = 10$ by default because it achieves near-ceiling classification accuracy without destabilizing policy learning and produces predicted masks accurate enough to replace oracle masks at negligible performance cost.

\begin{figure}[h]
    \centering
    \includegraphics[width=\textwidth]{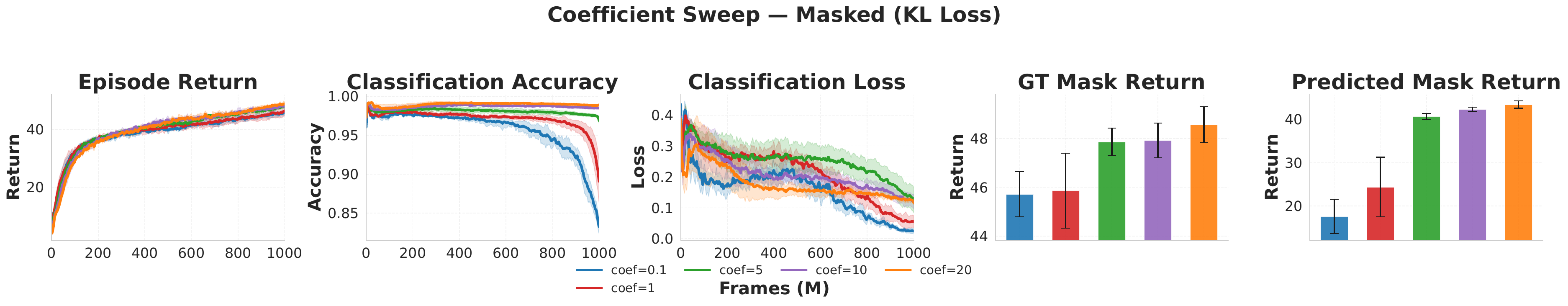}
    \caption{Classification coefficient sweep with oracle masking and KL-balanced loss on Craftax (PPO-Hybrid). Left: Episode return. Center-left: Classification accuracy. Center-right: Classification loss. Right: Final return evaluated with ground-truth masks (dark) and predicted masks (light).}
    \label{fig:coeff_sweep_masked_kl}
\end{figure}

\subsection{Suppression and Recovery of the Descend Action}
\label{app:descend_suppression}

The descend action in Craftax is valid only when the agent stands on a staircase (rare early in training) but is critical for dungeon progression. Figure~\ref{fig:descend_early} tracks probability and feature correlation for descend across the first 200M frames.

Without masking, $\pi(\text{descend} \mid s^*)$ drops from uniform initialization $1/43 \approx 0.023$ to below $10^{-3}$ within 100M frames. This drop exceeds an order of magnitude, matching Theorem~\ref{thm:prob_suppression}. The agent has not visited staircase states frequently enough for the classification loss to build separation. Around 125--150M frames, unmasked + predictor recovers sharply. Probability jumps back above $10^{-1}$ as feature correlation drops from approximately 0.75 to approximately 0.3. Once features separate, suppression breaks and the policy rapidly reallocates probability. This delay corresponds to the time needed to accumulate gradient signal on rare valid states.

The masked conditions never experience suppression. $\pi(\text{descend} \mid s^*)$ stays above $10^{-1}$ throughout. But they differ in feature quality. Masked only maintains high feature correlation near 0.75--0.85 and high probability at invalid states near 0.3--0.5, relying entirely on masking for correct behavior. Masked + predictor achieves lower correlation near 0.65--0.7 and lower invalid-state probability, which indicates classification improves representations even when masking prevents suppression at the policy level.

\begin{figure}[h]
    \centering
    \includegraphics[width=\textwidth]{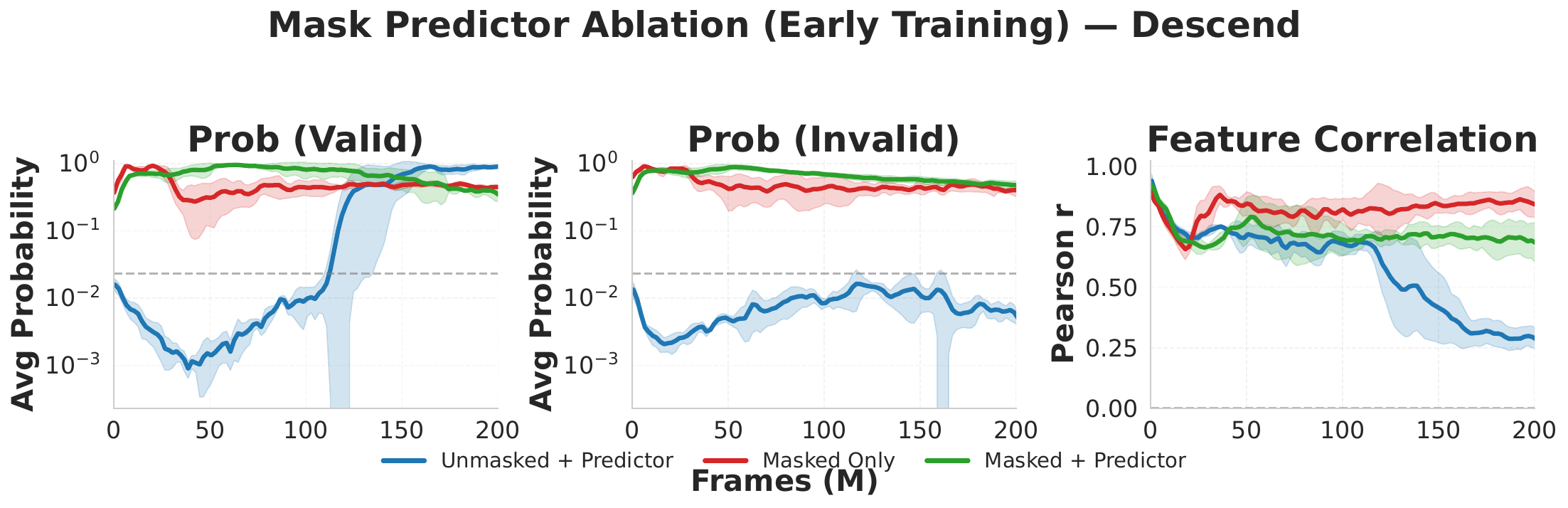}
    \caption{Suppression profile of the descend action during early training (PPO-Hybrid, $\lambda = 20$). Left: probability at valid states (log scale). Center: probability at invalid states (log scale). Right: Pearson correlation between feature representations of valid and invalid states.}
    \label{fig:descend_early}
\end{figure}